%% file: main.tex
\documentclass[11pt]{article}

\usepackage[margin=1in]{geometry}
\usepackage{amsmath,amssymb,amsfonts}
\usepackage{graphicx}
\def\sharedbib{references}
\graphicspath{{./}}
\usepackage{booktabs}
\usepackage[ruled]{algorithm2e}
\usepackage{mdframed}
\usepackage[table]{xcolor}
\usepackage{comment}
\usepackage{subcaption}
\usepackage{threeparttable}
\usepackage{setspace}
\usepackage{booktabs}
\usepackage{multirow}

\usepackage{parskip}

\usepackage[numbers,sort&compress]{natbib}
\usepackage[colorlinks,citecolor=blue,linkcolor=blue,urlcolor=blue]{hyperref}

\SetAlFnt{\small}
\SetAlCapFnt{\small}
\SetAlCapNameFnt{\small}
\SetAlCapHSkip{0pt}
\IncMargin{-\parindent}

\makeatletter
\g@addto@macro\@floatboxreset\centering
\makeatother

\usepackage[suppress]{color-edits}
\addauthor{cp}{red}
\addauthor{am}{blue}
\addauthor{sc}{red}

\newif\ifarxiv
\arxivtrue

\newmdenv[
  linewidth=0.8pt,
  backgroundcolor=gray!5,
  roundcorner=3pt,
  innerleftmargin=8pt,
  innerrightmargin=8pt,
  innertopmargin=8pt,
  innerbottommargin=8pt,
  skipabove=\baselineskip,
  skipbelow=\baselineskip
]{promptbox}

\usepackage[T1]{fontenc}
\usepackage{microtype}
\usepackage{wrapfig}
\usepackage{needspace}
\usepackage{enumitem}
\usepackage{mathtools}
\usepackage{amsthm}
\usepackage[most]{tcolorbox}
\definecolor{ourscolor}{RGB}{230,243,255}
\newtcolorbox{summarybox}{colframe=black!70,boxrule=1.5pt,arc=3pt,left=6pt,right=6pt,top=6pt,bottom=6pt,halign=flush left}

\usepackage[disable,textsize=tiny]{todonotes}

\usepackage[capitalize,noabbrev]{cleveref}
\newcommand{\appendixref}[1]{Appendix~\ref{#1}}

\theoremstyle{plain}
\newtheorem{theorem}{Theorem}[section]
\newtheorem{proposition}[theorem]{Proposition}
\newtheorem{lemma}[theorem]{Lemma}

\theoremstyle{definition}
\newtheorem{definition}[theorem]{Definition}

\theoremstyle{remark}

\title{Where Did We Go Wrong? \\ Localizing Errors to Repair Multi-Turn Conversations}
\author{%
    Alessandro Morosini$^{1}$\quad
    Wei Sun$^{2}$\quad
    Georgia Perakis$^{1}$\\[8pt]
    $^{1}$Massachusetts Institute of Technology\\
    $^{2}$IBM Research
}
\date{\today}

\begin{document}

\maketitle

\begin{abstract}
Large language model (LLM) assistants are increasingly used in multi-turn conversations, where the user's intent is often unclear at first and revealed only gradually over time. 
The assistant must therefore interpret each exchange and form \emph{inferences} about the user's intent, using them to update its \emph{``working state''} (i.e., its accumulated interpretation of the conversation so far).
These updates usually refine the assistant's understanding, but a misinterpretation can instead \emph{corrupt} its working state by ruling out the true intent, influencing subsequent turns and ultimately causing the conversation to fail.
Efficient recovery then requires locating and revising the misinterpretation without discarding information that remains valid.

We introduce \textsc{Loci} (Localization Of Corrupted Inferences), a framework for probabilistically \emph{localizing} misinterpretations and selectively \emph{repairing} the assistant's working state. 
Given an observed failure, \textsc{Loci} reasons over possible user intents and assigns each update a probability of having excluded the true intent.
These probabilities rank suspected updates and quantify the model's confidence in the localization of the underlying misinterpretations. 
\textsc{Loci} then uses this information in a repair policy that decides whether to retain, verify, or remove suspected updates, enabling recovery from corrupted inferences while preserving useful information.

To evaluate \textsc{Loci}'s ability to identify misinterpretations, we first test it in a controlled setup with injected errors. 
We find that \textsc{Loci} improves localization over the baselines, with its advantage becoming larger when multiple errors occur.
We then evaluate it in free-form conversations, where misinterpretations arise naturally, and measure the interactions needed to complete the task after an initial failure. 
Using localization confidence to guide repair, \textsc{Loci} reduces mean recovery cost by 46--79\% relative to the strongest baseline across multiple domains.
\end{abstract}

\input{sections/introduction}
\input{sections/problem_setup}
\input{sections/localization}

\input{sections/results}

\input{sections/related_work}
\input{sections/conclusion}

\bibliography{\sharedbib}
\bibliographystyle{abbrvnat}

\clearpage
\appendix

\renewcommand{\thefigure}{A\arabic{figure}}
\renewcommand{\thetable}{A\arabic{table}}
\setcounter{figure}{0}
\setcounter{table}{0}

\input{appendix/app_proofs}
\input{appendix/app_methodology}
\end{document}

%% file: sections/introduction.tex
\section{Introduction}
\begin{figure}[t]
    \centering
    \includegraphics[width=\linewidth]{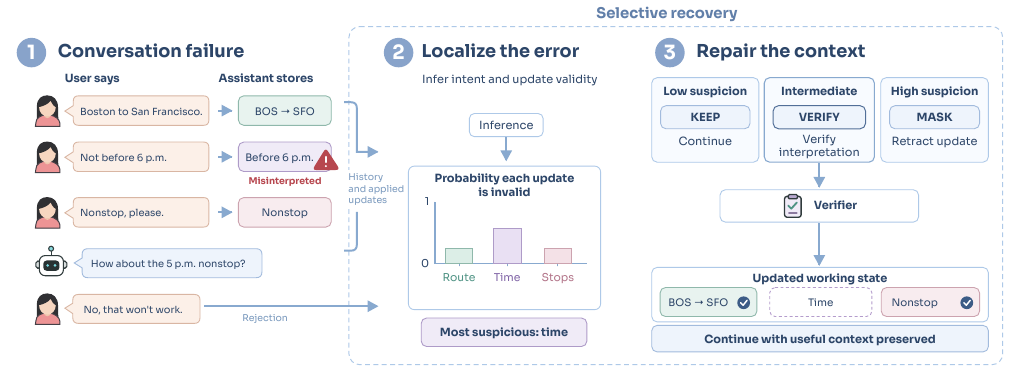}
    \caption{\textbf{Overview of \textsc{Loci}.} (1) The assistant stores an incorrect interpretation of a user constraint, and the error leads to a proposal that the user rejects. (2) \textsc{Loci} uses the rejection and the applied updates to infer the intended target and each update's probability of being invalid. (3) These scores are used to guide a repair policy to selectively fix the conversational state.}
    \label{fig:overview}
\end{figure}

Many real-world applications require large language model (LLM) assistants to engage in a conversation. Users rarely specify all relevant constraints upfront, and the assistant must progressively elicit and refine their intent over multiple exchanges. Yet, while modern LLMs perform strongly on single-turn tasks, their performance has been observed to deteriorate in multi-turn interactions~\citep{kwan-etal-2024-mt, deshpande-etal-2025-multichallenge, laban2026lost}. 
To use the information revealed in each exchange, the assistant must infer what it implies about the user’s intent. When such an inference is wrong, the resulting misinterpretation can become embedded in the assistant’s working state, corrupting its evolving inference and continuing to shape later responses even as new information becomes available.
Such errors can persist across turns and ultimately prevent successful task completion~\citep{kwan-etal-2024-mt, chen-etal-2026-breaking}.
This failure mode more generally appears in sequential decision problems where information is collected over time to infer an unknown state. In a medical context, for example, an early misinterpretation can shift the diagnostic process toward the wrong hypothesis, steering subsequent questions and delaying the correct diagnosis~\citep{fang2026benchmarkingmultiturnmedicaldiagnosis, nori2025sequentialdiagnosislanguagemodels}. %

To address this problem, prior efforts have focused on making LLMs more proactive in preventing misunderstandings, e.g., by planning when and how to gather information \citep{NEURIPS2024_2b0e14ab,dong-etal-2026-value}, or by teaching models to ask clarifying questions and collaborate with users \citep{Chen2025,pmlr-v267-wu25i}. Such approaches can reduce errors, but do not explicitly address the problem of recovery once an error becomes apparent, which may occur several turns after the original mistake, after its effects have already compounded. 
Recent approaches have begun to address recovery in multi-turn interactions by refactoring conversation history \citep{shen2026acr} or generating recovery plans \citep{kim2026rein}. Other work shows that removing prior assistant responses can preserve response quality \citep{huang2026llms}, while modifying selected turns can sometimes improve it \citep{li2026llmsatewordscausal}, suggesting that full conversation history is not always necessary. However, it remains unclear how to effectively identify which earlier turns contain interpretations that caused the failure. 

More broadly, recovery is difficult because the assistant must distinguish useful context from earlier interpretations that may be corrupted. One option is to restart the conversation, discarding all accumulated context: this removes corrupted information, but also useful information, and may lead to the same failure again. At the other end, retaining the full history preserves useful context, but also any earlier misinterpretations. \textbf{Efficient recovery therefore requires identifying which earlier interpretations may have corrupted the conversation while preserving the rest.} This motivates the problem of \emph{localizing corrupted inferences}: identifying the earlier interpretations responsible for a failure so that they can be selectively revised or removed. This capability would be a step toward more reliable context management in long-horizon assistants and agents, which must maintain useful internal state as interactions grow longer and more complex.

In this work we introduce \textsc{Loci} (Localization Of Corrupted Inferences), a framework for probabilistically identifying wrong interpretations in a multi-turn conversation and selectively repairing the conversational state. The key idea is that each candidate user intent implies a possible explanation of which earlier updates were incorrect: if it were the true intent, every update that excludes it would be wrong. \textsc{Loci} weighs these explanations using the recorded updates and the observed failure of the conversation (e.g., user rejects a proposal) to assign each prior update a probability of having excluded the true intent. Because a candidate intent may imply that several earlier updates were wrong, its probability mass contributes to each of their corruption probabilities, allowing \textsc{Loci} to localize multiple errors.

\textsc{Loci} uses the corruption probabilities assigned to earlier updates both to localize likely errors and to guide an adaptive repair policy. For localization, their ranking identifies likely sources of error, while their magnitudes quantify confidence in the attribution. For repair, because retaining invalid updates can prevent recovery while deleting valid updates can require additional questions, the policy uses these scores to decide whether to discard or verify a suspected update, or retain the current state and resume interaction. The policy repeats this decision to recover from the failure, balancing verification costs against the risks of retaining errors or discarding useful information.

We evaluate \textsc{Loci} both for localization accuracy and for recovery efficiency (i.e., number of additional turns required to reach the correct answer after an initial failure). We first study both in a controlled environment where the user's target, the meaning of each update, and the injected errors are known exactly, allowing us to measure localization accuracy explicitly. \textsc{Loci} reliably localizes corrupted updates across both single- and multi-error settings, with its largest gains in interactions containing multiple errors. At the highest tested error rate, \textsc{Loci} ranks an invalid update first in $74.4\%$ of failed interactions, compared with $62.8\%$ for the strongest baseline. We then extend the setup to free-form LLM conversations, where errors arise naturally. \textsc{Loci} reaches the correct guess in $46$--$79\%$ fewer additional turns on average than the strongest evaluated baseline across four domains. We further evaluate \textsc{Loci} on task-oriented dialogue, where it likewise improves localization and recovery. 

In summary, our contributions are as follows. We provide: 
\begin{itemize}[leftmargin=2.5em]
    \item \textbf{A formal model for retrospective error localization.} We represent a conversation as a sequence of (potentially incorrect) updates to the assistant’s working state and formulate the problem of identifying which updates excluded the user’s true intent after an observed failure.
    
    \item \textbf{A probabilistic corruption score for localizing multiple errors.} We assign each update a probability of having excluded the true intent under a specified belief model; these rank likely error locations and quantify the model's confidence in the localization.
    
    \item \textbf{An adaptive policy for selective repair.} We use the localization confidence to decide whether to retain the current state, remove a suspected update, or verify it, balancing the cost of verification against the risks of retaining errors or discarding useful information.
    
    \item \textbf{Empirical evidence of improved localization and recovery.} We evaluate our method in controlled interactions, free-form LLM conversations, and task-oriented dialogue, finding improved error localization and fewer recovery interactions than the evaluated baselines.
\end{itemize}

The rest of the paper is organized as follows. \Cref{sec:problem-formulation} formalizes the problem setup. \Cref{sec:problem} introduces \textsc{Loci} and its repair policy. \Cref{sec:results} presents the empirical evaluation. \Cref{sec:related-work} reviews related work, and \Cref{sec:discussion} concludes.

%% file: sections/problem_setup.tex
\section{Problem Formulation}
\label{sec:problem-formulation}

We formalize a multi-turn interaction as a sequence of potentially incorrect updates to the assistant’s representation of a latent user target. More formally, let $\mathcal{X}$ be a finite candidate set and let $X\in\mathcal{X}$ denote the user's latent target. At turn $i$, the assistant produces a message $a_i$ and the user responds with $u_i$, yielding the history $h_i=(a_1,u_1,\ldots,a_i,u_i)$. The exchange $(a_i,u_i)$ induces an \emph{intended} constraint $\ell_i:\mathcal{X}\to\{0,1\}$, where $\ell_i(x)=1$ means that the candidate $x$ is consistent with the interaction (e.g., a flight departing after the requested time). By construction, $\ell_i(X)=1$ (i.e., the true intent is compatible with every intended constraint). 

The user may state a constraint explicitly or imply it through the exchange and its context. In either case, the assistant must infer how the interaction constrains the latent target. We represent this inference by an \emph{applied} constraint $\tilde{\ell}_i:\mathcal{X}\to\{0,1\}$. An applied update is valid when $\tilde{\ell}_i=\ell_i$ and misinterpreted otherwise. The errors of interest are those that exclude the user’s true intent from the assistant’s state; throughout, we therefore focus on invalid updates satisfying $\tilde{\ell}_i(X)=0$.

The observed conversation and the assistant’s interpretation of it can differ. The history records what was exchanged, while the assistant’s working state records how those exchanges were interpreted. At each turn $i$, we therefore distinguish the raw interaction history $h_i$ from the assistant's working state $C_i=(\tilde{\ell}_1,\ldots,\tilde{\ell}_i)$. The corresponding set of feasible user intents is
\[
    \mathcal{X}_i
    =
    \{x\in\mathcal{X}:
      \tilde{\ell}_j(x)=1 \ \forall j\le i\}.
\]
For example, let $\mathcal{X}$ be a set of possible flights, and suppose the assistant asks $a_i=$ ``What departure time works?'' and the user responds $u_i=$ ``Not before 6 p.m.'' Let $t(x)$ denote the departure time of flight $x$. The intended constraint is $\ell_i(x)=\mathbf{1}\{t(x)\ge 18{:}00\}$, but the assistant instead interprets it as $\tilde{\ell}_i(x)=\mathbf{1}\{t(x)<18{:}00\}$. After this update, $\mathcal{X}_i$ contains only flights departing before 6 p.m. (that also satisfy the earlier constraints). The true flight $X$, which departs after 6 p.m., is therefore erroneously excluded.

To represent whether each applied update correctly reflects the intended constraint, we introduce a latent validity indicator. Let $M_i=\mathbf{1}\{\tilde{\ell}_i=\ell_i\}\in\{0,1\}$ denote the latent validity of update $i$. Since we focus on errors that exclude the target, the indicator satisfies $M_i=\tilde{\ell}_i(X)$.
Let $T$ denote the point in the interaction at which the assistant commits to a candidate $\hat{x}\in\mathcal{X}_T$ based on its current state. If the user accepts the commitment, the interaction succeeds. We focus on the case in which the user rejects it, which signals a failure and triggers repair of the working state. The difficulty is that the rejection shows that the proposed candidate was incorrect, but does not identify which earlier update was invalid. Because $M_{1:T}$ is unobserved, the assistant does not know which earlier interpretations caused the failure. At repair time, the available information is $O_T=(h_T,C_T,\hat{x})$, while the target $X$, intended constraints $\ell_{1:T}$, and validity indicators $M_{1:T}$ are latent.

In the flight example, $\tilde{\ell}_i(X)=0$, so $M_i=0$. Because this update excludes the true flight, $X\notin\mathcal{X}_i$ and remains excluded from $\mathcal{X}_T$. Thus, any candidate $\hat{x}\in\mathcal{X}_T$ proposed at turn $T$ is incorrect and rejected by the user. With multiple preceding updates, however, the rejection does not reveal which $M_j$ are zero, so the source of the failure remains latent at repair time.

%% file: sections/localization.tex
\section{Localization of Corrupted Inferences}
\label{sec:problem}

In this section we introduce \textsc{Loci}, a framework for probabilistically localizing misinterpretations and selectively repairing the assistant's working state. The high level idea is that each candidate user intent implies a possible explanation of which earlier updates were incorrect, and \textsc{Loci} weighs these explanations using the observed updates and failure feedback to assign each update a corruption probability (\Cref{sec:inference}). These probabilities rank likely error locations and quantify the confidence in their localization (\Cref{sec:localization}). A repair policy then uses them to decide whether to verify an update, remove it, or resume interaction (\Cref{sec:repair}). An overview of \textsc{Loci} is provided in \Cref{fig:overview}. 

\subsection{Probabilistic Error Attribution}
\label{sec:inference}

Each candidate intent determines which earlier updates would be invalid: if $x$ were the true target, every update with $\tilde{\ell}_i(x)=0$ would have excluded it. \textsc{Loci} weighs these possible explanations using a belief $b(x)$ over candidate intents after the observed rejection. The corruption score of update $i$ is the belief mass assigned to candidates that it excludes, so an update receives a high score when it rules out plausible explanations of the failure. We first define this score for a given belief $b$ and then specify how $b$ is constructed.

\begin{definition}[\textsc{Loci} score]
\label{def:loci}
Let $b$ be a specified belief distribution over candidate intents in
$\mathcal X$. The \textsc{Loci} score of update $i$ is
\begin{equation}
c_i
:=
\Pr_b(M_i=0)
=
\sum_{x\in\mathcal X}
b(x)\bigl[1-\tilde{\ell}_i(x)\bigr].
\label{eq:corruption}
\end{equation}
Thus, $c_i$ is the corruption probability of update $i$ under $b$.
\end{definition}
To construct $b$, we assign each candidate $x$ a compatibility weight at every update $i$. Let $\rho_i\in(0,1)$ denote an assumed prior reliability of the applied constraint (which may be turn-specific when prior information is available, or common across updates otherwise). Let $q_i(x)\geq 0$ be a specified error weight for candidate $x$. When the error model defines a distribution over incorrect updates, $q_i(x)$ is
the probability it assigns to the recorded applied constraint when $X=x$. If $\tilde{\ell}_i(x)=1$, then $x$ is consistent with the recorded update, which contributes weight $\rho_i$. If $\tilde{\ell}_i(x)=0$, then $x$ can explain the recorded update only if that update was incorrect, which contributes weight $(1-\rho_i)q_i(x)$. Combining the two cases, we define the compatibility weight of candidate $x$ with update $i$ as
\begin{equation}
L_i(x)
:=
\rho_i\tilde{\ell}_i(x)
+
(1-\rho_i)q_i(x)\bigl[1-\tilde{\ell}_i(x)\bigr].
\label{eq:update-likelihood}
\end{equation}
Here, $\rho_i$ represents prior trust in update $i$. Under the resulting belief $b$, the inferred probability that update $i$ is valid is $1-c_i$, which may differ from $\rho_i$. When $q_i(x)$ is constant over incompatible candidates, we write $q_i(x)=\kappa_i$. 

We aggregate the compatibility weights across updates multiplicatively to
define a belief over candidate intents. Given a prior $p_0$ on $X$ and a
rejected commitment $\hat{x}$, let
$S=\mathcal{X}\setminus\{\hat{x}\}$.
Normalizing the prior-weighted compatibility scores over $S$ gives
\begin{equation}
b(x)
:=
\frac{
p_0(x)\prod_{i=1}^{T}L_i(x)
}{
\sum_{x'\in S}p_0(x')\prod_{i=1}^{T}L_i(x')
},
\qquad x\in S,
\label{eq:target-posterior}
\end{equation}
provided the denominator is positive, with $b(x)=0$ outside $S$.
Substituting this belief into \Cref{eq:corruption} gives the
\textsc{Loci} scores under the specified model.

\subsection{Error Localization}
\label{sec:localization}
We localize errors by ranking updates according to their \textsc{Loci}
scores $c_i$. The top-ranked update is
\[
\hat\tau \in \arg\max_{1\leq i\leq T} c_i.
\]
Because $c_i=\Pr_b(M_i=0)$, this rule selects the update with the highest
probability of being invalid under $b$. We use \(c_{\hat\tau}\) as the model's confidence in the error localization.

Each score $c_i$ aggregates belief mass over every candidate intent for which update $i$ is invalid, including candidates that violate several applied constraints.
A candidate excluded by multiple updates
contributes its belief mass to each of their scores. Because multiple updates
may be invalid, the \textsc{Loci} scores do not generally sum to one; their
sum is the expected number of invalid updates under $b$. Once $b$ is
computed, all $T$ scores require only summation.

The single-error case gives a simple interpretation of this localization rule. If exactly one update is invalid, removing it makes the target consistent with every remaining applied constraint. For each update \(i\), define
\[
N_i :=
\left|
\left\{
x\in S:
\tilde\ell_i(x)=0,\;
\tilde\ell_j(x)=1 \text{ for every } j\ne i
\right\}
\right|,
\]
so that $N_i$ counts the candidate intents consistent with update $i$ being
the only invalid update. The following lemma states when these counts determine the conditional distribution over the location of the error.

\begin{lemma}[Single-error localization]
\label{lem:single-error-localization}
Let \(E=\{\sum_{j=1}^{T}(1-M_j)=1\}\) denote the event that exactly one
update is invalid, and let \(\tau\) denote its position. Suppose \(p_0(x)\)
is constant and positive on \(S\), \(\rho_i=\rho\in(0,1)\) for every update
\(i\), and there is a common \(\kappa\in(0,1]\) such that
\(q_i(x)=\kappa\) whenever \(x\in S\) and \(\tilde\ell_i(x)=0\). If
\(N:=\sum_{j=1}^{T}N_j>0\), then
\[
\Pr_b(\tau=i\mid E)=\frac{N_i}{N}.
\]
\end{lemma}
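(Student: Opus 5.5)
Under $b$, the validity indicators are deterministic functions of the latent target: $M_i=\tilde\ell_i(X)$ with $X\sim b$. So the plan is to rewrite every event about $M_{1:T}$ as an event about $X$ and then compute belief masses explicitly.

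First, the event $E\cap\{\tau=i\}$ is exactly $\{X\in A_i\}$, where
\[
A_i:=\{x\in S:\tilde\ell_i(x)=0,\ \tilde\ell_j(x)=1\ \text{for all } j\neq i\}.
\]
This uses that $b$ vanishes outside $S$, and it gives $|A_i|=N_i$. The sets $A_1,\dots,A_T$ are pairwise disjoint, since a candidate in $A_i$ violates only constraint $i$. Their union is $E$ (restricted to $S$). Hence
\[
\Pr_b(\tau=i\mid E)=\frac{\sum_{x\in A_i}b(x)}{\sum_{j}\sum_{x\in A_j}b(x)},
\]
provided the denominator is positive.

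Next I evaluate $b$ on $A_i$ using \Cref{eq:update-likelihood} and \Cref{eq:target-posterior}. For $x\in A_i$, every update $j\neq i$ has $\tilde\ell_j(x)=1$ and contributes $L_j(x)=\rho$. The update $i$ has $\tilde\ell_i(x)=0$ and $x\in S$, so it contributes $(1-\rho)\kappa$. Therefore
\[
\prod_j L_j(x)=\rho^{T-1}(1-\rho)\kappa,
\]
which is the same value for every $x\in\bigcup_j A_j$. Because $p_0$ is a constant $p>0$ on $S$, we get $b(x)=p\,\rho^{T-1}(1-\rho)\kappa/Z$ for all such $x$, where $Z$ is the normalizer of \Cref{eq:target-posterior}.

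The one point needing care is that $Z$ is positive, so that $b$ is well defined. Since $N>0$, some $x\in S$ lies in some $A_j$. For that $x$, $p_0(x)\prod_j L_j(x)=p\,\rho^{T-1}(1-\rho)\kappa>0$, because $\rho\in(0,1)$ and $\kappa>0$. All other terms in $Z$ are nonnegative, so $Z>0$. The same computation shows $\Pr_b(E)=N\,p\,\rho^{T-1}(1-\rho)\kappa/Z>0$, so conditioning on $E$ is legitimate.

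Finally, the common factor $p\,\rho^{T-1}(1-\rho)\kappa/Z$ cancels between numerator and denominator. This leaves $N_i/\sum_j N_j=N_i/N$.
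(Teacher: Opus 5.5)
Your proposal is correct and follows essentially the same route as the paper's proof. Both identify $E\cap\{\tau=i\}$ under $b$ with membership in the disjoint single-error witness sets (your $A_i$, the paper's $W_i$), and both note that every witness gets the same unnormalized weight $p_0(x)(1-\rho)\kappa\rho^{T-1}$, which cancels to give $N_i/N$; your explicit check that the normalizer $Z$ and $\Pr_b(E)$ are positive is a small, welcome addition that the paper only asserts.
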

Thus, under the lemma's assumptions,
\(\hat\tau_{\mathrm{SE}}\in\arg\max_i N_i\) is the maximum a posteriori (MAP) localization rule under $b$, conditional on exactly one invalid update. We refer to this estimator as Counting in the experiments and prove the result in \appendixref{sec:proof-single-error}. 
More generally, under the same assumptions, the normalized \textsc{Loci} scores \(c_i/\sum_j c_j\) converge to \(N_i/N\)
as \(1-\rho\) approaches zero, without conditioning on exactly one invalid update
(\appendixref{sec:proof-rare-error}).

\subsection{Selective Repair Policy}
\label{sec:repair}
Localization identifies which updates are most likely to be invalid, but does
not by itself determine how they should be handled. Moreover, if a
misinterpretation is embedded in the assistant's working state, it may
continue to shape its later reasoning, including its attempts to diagnose the
failure in the conversation. This motivates separating task execution from
repair, using a controller with policy $\pi$ to intervene on the working state
of a downstream assistant $\mu$.

After a rejected commitment, repair must balance retaining invalid updates against removing valid ones, since either can increase the interactions required for recovery. Let $N_{\mathrm{repair}}(\pi;\mu)$ denote the number of additional 
interactions required before $\mu$ commits to the correct target $X$, with
$N_{\mathrm{repair}}(\pi;\mu)=\infty$ if recovery does not occur.
For a fixed $\mu$, we define the repair objective as
\[
\pi^\star
\in
\arg\min_{\pi}
\mathbb{E}\!\left[
N_{\mathrm{repair}}(\pi;\mu)
\mid O_T
\right].
\]
Repair should preserve useful information while removing enough of the
corrupted state to recover. Rather than either resuming from the state
unchanged or restarting the conversation from scratch, the controller can
use the \textsc{Loci} scores to intervene selectively on individual updates.
We therefore consider three repair actions: $\texttt{KEEP}$
preserves the current working state and resumes the task;
$\texttt{VERIFY}(i)$ uses one interaction to check whether the target
satisfies the selected constraint, retaining it if valid and removing it
otherwise; $\texttt{MASK}(i)$ removes the constraint from the working state without a user interaction. In our implementation, verification is reliable and is implemented by asking the user to confirm the constraint, although it could also be implemented in alternative ways (e.g., deferring to a stronger model or a task-specific tool). Questions, verification checks, and rejected commitments each cost one interaction, while masking and the final correct commitment cost zero. While we evaluate recovery using additional interactions, the objective can
instead reflect other action costs by redefining $N_{\mathrm{repair}}$
(e.g., latency or tokens).

Masking saves the cost of verification, but is risky when the selected update may still be valid. Verification avoids this risk by checking the update before removing it, but incurs an additional interaction. Intuitively, the controller should therefore mask only at high localization confidence, verify when the evidence is less decisive, and keep the state unchanged when there is little evidence of an error. We model this tradeoff with a two-threshold policy over the highest-scoring unresolved update. At each repair step, let $\hat\tau$ denote the unresolved update with the largest
\textsc{Loci} score. For
\(0 \leq \gamma_{\mathrm{keep}} < \gamma_{\mathrm{mask}} \leq 1\),
the policy takes
\[
\pi_\gamma =
\begin{cases}
\texttt{KEEP},
    & c_{\hat\tau} < \gamma_{\mathrm{keep}},\\[1mm]
\texttt{VERIFY}(\hat\tau),
    & \gamma_{\mathrm{keep}} \leq c_{\hat\tau}
      < \gamma_{\mathrm{mask}},\\[1mm]
\texttt{MASK}(\hat\tau),
    & c_{\hat\tau} \geq \gamma_{\mathrm{mask}}.
\end{cases}
\]
After $\texttt{MASK}$ or $\texttt{VERIFY}$, repair continues over the remaining unresolved updates. Masking removes the selected constraint from the working state while leaving the controller's belief unchanged, since no new information is observed. Verification additionally provides information about the validity of the selected update, which may be incorporated into subsequent
belief updates. When no unresolved updates remain, the controller resumes $\mu$.

%% file: sections/results.tex
\section{Experimental Results}
\label{sec:results}

We evaluate \textsc{Loci} on both error localization and recovery efficiency,
measuring how well it identifies invalid updates and how many additional
interactions are required to recover after an initial failure. 
We start from a simplified, controlled setup with a fixed pool of binary questions and injected mistakes. Knowing which updates are incorrect lets us measure localization accuracy directly, while controlling the repair actions lets us separate the effects of localization and action selection on recovery cost. Then, we relax these simplifications and move to a free-form conversation experiment, where a language model generates questions and interprets responses, allowing mistakes to arise naturally. Finally, we extend the evaluation beyond binary questions to categorical requirement updates in task-oriented dialogues. \appendixref{app:results} provides further experimental details and results.

\subsection{Controlled Experiments}
\label{sec:results-controlled}

\begin{figure}[t]
  \centering
  \includegraphics[width=0.9\linewidth]{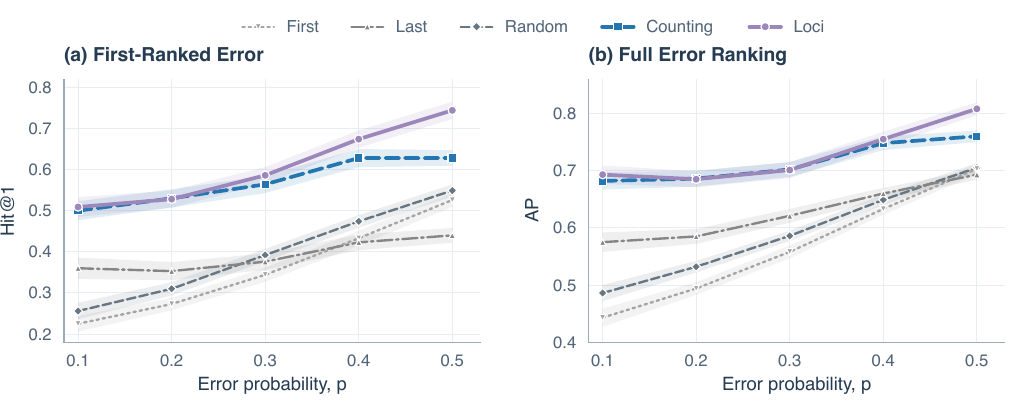}
  \caption{\textbf{Error localization in the controlled 20Q setup.} We compare localization methods after a wrong guess for different error probabilities. (a) Hit@1 measures whether the top-ranked update is invalid. (b) Average precision evaluates the ranking of all invalid updates. }
  \label{fig:controlled-localization}
\end{figure}

\textbf{Setup.}
We start with a controlled environment adapted from the 20 Questions (20Q) game \citep{NEURIPS2024_2b0e14ab}. The assistant aims to identify a hidden target by asking yes/no questions from a fixed pool. Each answer narrows the candidate set (e.g., answering ``yes'' to ``Can it fly?'' retains only candidates that can fly). 
The question--answer pair therefore defines the intended constraint $\ell_i$, which retains candidates whose answers agree with the user's response.

At each turn, with probability $p\in\{0.1,0.2,0.3,0.4,0.5\}$, we simulate a misinterpretation by applying the truthful answer to a different question's predicate, producing an applied constraint $\tilde{\ell}_i$. We uniformly sample this replacement among predicates that exclude the target while retaining at least one current candidate. This gives us invalid updates whose locations are known for evaluating localization and repair. The assistant then continues asking questions until it makes a wrong guess or a later update leaves no feasible candidates, revealing that an earlier interpretation was incorrect.

We evaluate this setup on four main domains: Animals, Food, Objects, and Places \citep{NEURIPS2024_2b0e14ab}. To evaluate performance with a larger set of intents, we also use CUB-200-2011, a dataset of 200 bird species \citep{wah2011caltech} (more details in \appendixref{app:repair-cub}). Given the conversational state, questions are selected from the pool under three policies: \emph{(i)} Random samples uniformly among unasked questions that split the remaining candidates; \emph{(ii)} Value of information (VoI) selects the question with the largest expected information gain \citep{dong-etal-2026-value}; \emph{(iii)} Natural uses Llama 3.1 8B \citep{grattafiori2024llama} to select a question from the pool. Throughout the controlled experiments, the reported results are obtained under the Natural policy, while results under different policies can be found in \appendixref{app:policies} and \appendixref{app:recovery-policies}.

\begin{wrapfigure}[]{r}{0.45\textwidth}
  \centering
  \includegraphics[width=\linewidth]{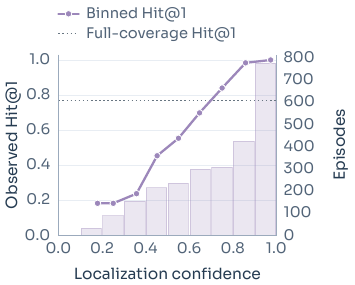}
  \caption{\textbf{Higher localization confidence corresponds to higher accuracy.} The curve shows Hit@1 within bins of $c_{\hat{\tau}}$. Bars show the number of episodes in each bin.
  }
  \label{fig:reliability-coverage}
\end{wrapfigure}

\paragraph{Localization.}
\Cref{fig:controlled-localization} reports Hit@1 in panel (a), which measures whether the top-ranked update is actually invalid, and average precision (AP) in panel (b), which measures how well the ranking places invalid updates ahead of valid ones. We compare \textsc{Loci} with Random, First, and Last, which rank updates randomly, earliest first, and latest first, respectively. We also compare it to Counting, which corresponds to the single-error MAP rule from \Cref{sec:localization}, which ranks updates by the number of candidates made feasible by removing each update. 

\textsc{Loci} and Counting perform similarly at low error rates. Their gap
widens at higher error rates, where failed conversations contain more invalid
updates on average: the mean number of invalid updates per failed conversation
rises from 1.26 at \(p=0.1\) to 2.70 at \(p=0.5\). This pattern is consistent with \textsc{Loci}'s ability to account for multiple invalid updates. At
\(p=0.5\), \textsc{Loci} achieves 74.4\% Hit@1 and 0.808 AP, compared with
62.8\% and 0.760 for Counting, and outperforms all remaining baselines on
both metrics. Full localization results for all three question policies on 20Q are reported in \appendixref{app:policies}, with the CUB setup and results discussed in \appendixref{app:repair-cub}.

Unlike the ranking baselines, which provide only an ordering over updates,
\textsc{Loci} also assigns the selected update a probabilistic score
\(c_{\hat{\tau}}\). On top of localization accuracy, we therefore also evaluate whether this score provides a useful measure of confidence in the top-ranked
update. \Cref{fig:reliability-coverage} groups top-ranked updates by their score and reports Hit@1 within each bin. We observe that accuracy increases monotonically across score bins, indicating that the score helps identify when the top-ranked update is more likely to be invalid. This supports the use of \(c_{\hat{\tau}}\) as a confidence signal
for selective repair.

\textbf{Repair.}
Repair begins after a wrong guess or an empty candidate set. During recovery,
we use the same error-injection procedure and set \textsc{Loci}'s prior update
reliability to \(\rho=1-p\). Positive verification conditions the controller's
belief on the selected constraint being valid. After a negative verification,
we remove the selected constraint from the working state but do not condition
the belief on the negative verdict. In the binary setting, doing so would also
reveal the correct alternative. This convention applies to all policies that
use verification. Recovery cost counts additional questions, verifications,
and rejected guesses after the initial failure; masking and the final correct
guess cost zero. For each domain and error rate, we select
\(\gamma_{\mathrm{keep}}\) and \(\gamma_{\mathrm{mask}}\) from a fixed grid
using five target folds. For each held-out fold, we choose the threshold pair
with the lowest mean recovery cost on the other four folds and evaluate it on
the held-out fold. All repetitions of a target remain in the same fold.
\input{tables/recovery_summary}

To isolate the effect of localization, we first let each method select a
suspected invalid update and use an always-\texttt{VERIFY} policy, which
applies \texttt{VERIFY} to the selected update at every repair step. This
allows us to compare \textsc{Loci} to the baselines under the same repair
action. We then fix localization to \textsc{Loci} and compare three repair policies: always-\texttt{VERIFY}, always-\texttt{MASK}, and the adaptive policy from
\Cref{sec:repair}. We also evaluate a Restart baseline, which discards the current working state and restarts from scratch, retaining only the information that the rejected
guess is not the target. 

Results are shown in \Cref{tab:recovery-summary}. Under the shared
always-\texttt{VERIFY} policy, \textsc{Loci} requires 12.04 recovery
interactions on average, compared with 13.48 for Counting, a reduction of
10.7\%, and also outperforms the remaining localization baselines. Adaptive
repair further reduces mean recovery cost to 9.99 interactions, a 17.1\%
reduction relative to always-\texttt{VERIFY}. Full repair results for all three question policies on 20Q are reported in \appendixref{app:recovery-policies}. Ablations of the repair policy are reported in \appendixref{app:controller-ablations}.

\subsection{Free-Form LLM Interactions}
\label{sec:free-form}
\label{sec:results-freeform}

\paragraph{Setup.}
So far, we have evaluated \textsc{Loci} in a controlled 20Q setting with a fixed question pool and injected errors. We now relax both simplifications while retaining the same underlying task, allowing questions and misinterpretations to arise from free-form language-model interactions. We use Llama 3.1 8B to generate the assistant's questions and to simulate the user's responses. To represent each exchange as a candidate-set update, the assistant model evaluates the question for every candidate and retains
candidates whose predicted answers agree with the recorded response. 
These model-generated constraints define the working state and the updates evaluated by \textsc{Loci}. Recovery begins after a wrong guess. After a repair action, the assistant continues generating questions and interpreting responses using the same procedure. Recovery costs follow \Cref{sec:results-controlled}, with a cap of 100 interactions. For these recovery experiments, we fix the common prior update reliability at \(\rho=0.9\) across all four domains.

\paragraph{Localization.}
Free-form conversations lack ground-truth error labels, making direct evaluation of localization accuracy difficult. An LLM judge can provide reference labels, but these depend on the judge's accuracy and interpretation of ambiguous questions. We therefore use recovery cost under a fixed repair policy as our primary measure of localization quality. All methods start from the same failed conversations and reconstructed updates and use the same always-\texttt{VERIFY} and continuation procedures, so differences in recovery
cost reflect differences in localization decisions.

\textsc{Loci} requires fewer interactions than all evaluated baselines in every
domain. Relative to the strongest baseline in each domain, it reduces mean
recovery cost by 35--50\%. As an additional evaluation, we also compare the localization rankings with reference labels provided by an independent LLM judge. Under this evaluation, \textsc{Loci} achieves 75.0\% Hit@1 and 0.643 AP (\appendixref{app:free-form}).

We also test whether the interacting LLM can diagnose its own errors. This test is particularly interesting in the free-form setting because, unlike with injected errors, the candidate-set updates and resulting misinterpretations are produced by the interacting model itself.
For this, we use Llama 3.1 8B to rank which earlier updates are most likely to require repair and compare it with \textsc{Loci} under the same always-\texttt{VERIFY} policy. 
\textsc{Loci} reduces mean recovery cost by $41.1\%$ on average (\appendixref{app:llama-selector}). This may reflect that the model's original misinterpretations also affect its ability to diagnose the problem: the same model constructs the updates and later judges which ones were incorrect. This result supports decoupling task execution from repair, instead of relying on the same model that produced the failed conversation to diagnose its own errors.

\paragraph{Repair.}
We next fix localization to use \textsc{Loci} scores and compare the always-\texttt{VERIFY}, always-\texttt{MASK}, and the adaptive policy. As in the controlled experiments, we use five target folds within each domain and select the thresholds for each held-out fold using the other four. The adaptive policy reduces mean recovery cost by 12--57\% relative to always-\texttt{VERIFY} and by 15--33\% relative to always-\texttt{MASK} across the four domains.
All three policies recover every failed conversation within the interaction budget. Per-domain results are reported in \Cref{tab:recovery-summary}. 

We also observe larger aggregate gains from using \textsc{Loci} in the free-form setting when compared to the
controlled experiments, for both localization and repair. To investigate this difference, we examine the recovery cost components in \appendixref{app:freeform-repair}. We find that verification accounts for a
larger share of recovery cost under \textsc{Loci} with
always-\texttt{VERIFY} in the free-form setting. Reducing verification checks
can therefore save a larger fraction of the total cost. The adaptive policy
avoids such checks through masking, although it incurs more questions and
rejected guesses than always-\texttt{VERIFY}. This suggests that \textsc{Loci} may be more effective in longer conversations, where it provides the opportunity to save checks while retaining enough information to identify the target. 

As a robustness check, we repeat the free-form recovery evaluation with Qwen3-8B and observe similar recovery gains: \textsc{Loci} with always-\texttt{VERIFY} again outperforms the
localization baselines, and the adaptive policy achieves the lowest mean cost
in all four domains (\Cref{tab:freeform-repair-qwen}). Separately, we ablate the simulated user's access to the conversation history in \appendixref{app:user-history}.

\subsection{Task-Oriented Dialogue}
\label{sec:results-categorical}

\paragraph{Setup.}
We next relax the assumption that questions must receive binary answers. For this, we use MultiWOZ 2.4 \citep{ye2022multiwoz}, a dataset of task-oriented dialogues in which users communicate requirements to an assistant (e.g., finding a cheap hotel in the east). These requirements are annotated as categorical slot values, where each slot is a named requirement (e.g., price or area). Each exchange defines an intended constraint containing all requirements introduced in that exchange (e.g., \texttt{price} $=$ \texttt{cheap} and \texttt{area} $=$ \texttt{east}). The assistant's goal is to recover the complete requirement configuration from a finite set of configurations supported by the database. We exclude annotated configurations outside the candidate space, where successful recovery would be impossible, and require at least two turns that introduce requirements, so localization involves choosing among multiple updates.

Similarly to the 20Q setup, we construct the updates from the annotations and inject errors independently with different probabilities. For each update, we randomly select one slot and replace its value with a uniformly sampled alternative (e.g., changing \texttt{area = east} to \texttt{area = west}). \appendixref{app:multiwoz-setup} details the benchmark, setup, and instantiation of \textsc{Loci}.

\begin{wraptable}[]{R}{0.5\textwidth}
  \centering
  \input{tables/multiwoz_summary}
    \caption{\textbf{\textsc{Loci} improves localization and reduces recovery cost on MultiWOZ.}
    Baselines use the always-\texttt{VERIFY}
    policy, \textsc{Loci} is adaptive.}
  \label{tab:multiwoz-summary}
\end{wraptable}

\textbf{Localization.}
As in \Cref{sec:results-controlled}, we evaluate localization using Hit@1 and AP over invalid updates. \textsc{Loci} achieves higher Hit@1 and AP than all baselines (\Cref{tab:multiwoz-summary}). Compared with Counting, the strongest baseline, it improves Hit@1 from 65.5\% to 75.8\% and AP from 0.817 to 0.871. These results average across error rates, while the full results can be found in \appendixref{app:multiwoz}.

\textbf{Repair.}
During repair, the model selects a slot and generates a question about it; the simulated user returns the slot's annotated value.
Adaptive \textsc{Loci} requires 2.63 recovery interactions on average, versus 3.17 for the strongest baseline (\Cref{tab:multiwoz-summary}). With localization fixed to \textsc{Loci}, always-\texttt{MASK} requires fewer interactions than adaptive repair (\Cref{tab:multiwoz-recovery}). This may be because reliable, targeted slot questions make discarded information cheap to recover, so verification checks cost more than they save in questions and wrong guesses. However, adaptive repair achieves the lowest cost in the free-form experiments, suggesting that the preferred intervention depends not just on localization confidence but also the cost of recovering discarded information, which is lower in this scenario (see \appendixref{app:multiwoz-repair} for more details).

%% file: tables/recovery_summary.tex
\begin{table}[t]
\centering\fontsize{9}{9.5}\selectfont\setlength{\tabcolsep}{1.5pt}
\begin{tabular}{@{}lrrrrrrr@{}}
\toprule
 & \multicolumn{2}{c}{Controlled recovery} & \multicolumn{5}{c}{Free-form LLM recovery} \\
\cmidrule(lr){2-3}\cmidrule(lr){4-8}
Method & 20Q & CUB & Animals & Food & Objects & Places & 20Q \\
\midrule
First + \texttt{VERIFY} & $15.66 \pm 0.50$ & $26.72 \pm 0.44$ & $22.29 \pm 1.79$ & $20.62 \pm 1.38$ & $25.33 \pm 1.76$ & $20.27 \pm 1.77$ & $22.13 \pm 0.87$ \\
Last + \texttt{VERIFY} & $13.84 \pm 0.43$ & $25.81 \pm 0.42$ & $19.71 \pm 1.28$ & $18.57 \pm 1.12$ & $23.67 \pm 1.62$ & $18.73 \pm 1.57$ & $20.17 \pm 0.72$ \\
Random + \texttt{VERIFY} & $14.81 \pm 0.46$ & $26.31 \pm 0.42$ & $20.67 \pm 0.94$ & $19.95 \pm 0.94$ & $26.00 \pm 1.97$ & $20.91 \pm 1.50$ & $21.88 \pm 0.71$ \\
Counting + \texttt{VERIFY} & $13.48 \pm 0.44$ & $23.82 \pm 0.41$ & $21.52 \pm 2.34$ & $18.62 \pm 0.90$ & $24.75 \pm 2.02$ & $19.00 \pm 1.56$ & $20.97 \pm 0.91$ \\
Restart & $63.76 \pm 3.33$ & $428.35 \pm 12.51$ & $43.67 \pm 5.90$ & $46.71 \pm 6.88$ & $56.88 \pm 7.71$ & $20.09 \pm 6.47$ & $41.84 \pm 3.38$ \\
\midrule
\textsc{Loci} + \texttt{VERIFY} & $12.04 \pm 0.38$ & $21.86 \pm 0.36$ & $12.05 \pm 1.34$ & $11.19 \pm 1.03$ & $15.29 \pm 1.41$ & $9.36 \pm 1.22$ & $11.97 \pm 0.63$ \\
\textsc{Loci} + \texttt{MASK} & $30.18 \pm 0.74$ & $217.01 \pm 3.30$ & $12.52 \pm 3.32$ & $10.48 \pm 2.64$ & $15.42 \pm 3.08$ & $6.00 \pm 2.23$ & $11.10 \pm 1.44$ \\
\textsc{Loci} adaptive & $\mathbf{9.99 \pm 0.35}$ & $\mathbf{20.51 \pm 0.35}$ & $\mathbf{10.62 \pm 2.50}$ & $\mathbf{7.10 \pm 1.40}$ & $\mathbf{10.83 \pm 2.13}$ & $\mathbf{4.00 \pm 1.48}$ & $\mathbf{8.14 \pm 0.96}$ \\
\bottomrule
\end{tabular}
\caption{\textbf{Fitted adaptive \textsc{Loci} has the lowest mean recovery cost in both the controlled and free-form setup.} Mean additional interactions after an initial failure $\pm$ target-cluster bootstrap SE. First, Last, Random, and Counting use \texttt{VERIFY}. Free-form 20Q averages the four domains equally.}
\label{tab:recovery-summary}
\end{table}

%% file: tables/multiwoz_summary.tex
\fontsize{8}{10}\selectfont\setlength{\tabcolsep}{2.5pt}
\begin{tabular}{@{}lrrr@{}}
\toprule
 & \multicolumn{2}{c}{Localization} & Repair \\
\cmidrule(lr){2-3}\cmidrule(l){4-4}
Method & Hit@1 $\uparrow$ & AP $\uparrow$ & Interactions $\downarrow$ \\
\midrule
First & $0.530 \pm 0.011$ & $0.745 \pm 0.006$ & $3.51 \pm 0.05$ \\
Last & $0.513 \pm 0.012$ & $0.734 \pm 0.007$ & $3.54 \pm 0.06$ \\
Random & $0.528 \pm 0.005$ & $0.741 \pm 0.004$ & $3.53 \pm 0.06$ \\
Counting & $0.655 \pm 0.012$ & $0.817 \pm 0.006$ & $3.33 \pm 0.05$ \\
Restart & -- & -- & $3.17 \pm 0.06$ \\
\midrule
\textsc{Loci} & $\mathbf{0.758 \pm 0.011}$ & $\mathbf{0.871 \pm 0.006}$ & $\mathbf{2.63 \pm 0.06}$ \\
\bottomrule
\end{tabular}

%% file: sections/related_work.tex
\section{Related Work}
\label{sec:related-work}

\paragraph{Multi-turn failure of LLMs.}
LLMs have been observed to often fail in multi-turn conversations. \citet{laban2026lost} show that LLMs commit to early incorrect interpretations of underspecified requests, struggling to recover as the conversation unfolds. Similar weaknesses are documented by other works such as \citep{kwan-etal-2024-mt, deshpande-etal-2025-multichallenge}. \citet{tack2026llms} show that performance can also degrade when the user revises their intent over successive turns, a failure mode closely related to the explicit corrections documented in \citet{chen-etal-2026-breaking}. \citet{huang2026llms} show that all the assistant's own prior turns can sometimes be omitted from context without degrading the conversation quality. Recent work examines the causal effects of earlier assistant turns \citep{li2026llmsatewordscausal} and organizes nonlinear dialogue histories to better manage context \citep{hu-etal-2026-context}. Overall, these studies suggest that LLMs may over-condition on past (wrong) turns and that revisiting conversation history may help.

\paragraph{Error localization and repair.}
Recovering (efficiently) from a failed conversation requires identifying the earlier updates responsible for the failure. LLMs, however, struggle to self-correct without external feedback \citep{ICLR2024_8b4add8b}, but can often correct an error when given its location \citep{tyen-etal-2024-llms}. In stepwise reasoning, process reward models are used to score intermediate steps \citep{lightman2024let}; closer to us, \citet{kim2026latent} treat step correctness as latent and infer it from the reasoning trace. In a conversation, the bad update may be the interpretation of an earlier exchange. \citet{xie-etal-2022-correctable} study this problem in dialogue states; \citet{shen2026acr} refactor conversation history to mitigate state drift. Other methods plan recovery or repair known memory faults \citep{kim2026rein,yu2026faultymemoriescorrectedactions}. \textsc{Loci} localizes likely invalid updates and makes repair selective by using failure feedback to estimate the validity of each past update.

\paragraph{Clarification and information acquisition.}
Clarification works study what information the assistant should aim to collect, and when asking the user is worth the effort. \citet{rao-daume-iii-2018-learning} rank questions by their expected information value, and \citet{NEURIPS2024_2b0e14ab} plan information-seeking questions under uncertainty. Deciding when to ask also benefits from learned policies and models of future turns \citep{Chen2025,ICLR2025_97e2df4b}. \citet{pmlr-v267-wu25i} frame eliciting user intent as part of long-term collaboration, while \citet{dong-etal-2026-value} weigh the benefit of asking against user effort. Other works test whether models acquire enough information to determine an answer \citep{NEURIPS2025_c42c8d51, huang2026llms-ask}. Our controller weighs the cost of checking a past interpretation against the risks of keeping or removing it.

%% file: sections/conclusion.tex
\section{Discussion}
\label{sec:discussion}

In this work we introduce \textsc{Loci}, a framework for localizing corrupted inferences in multi-turn conversations and selectively repairing the assistant's working state while preserving useful information. 
We first evaluate \textsc{Loci} in a controlled scenario to be able to isolate and directly measure its localization accuracy, and then extend the evaluation to more realistic scenarios: free-form LLM conversations, and task-oriented dialogues. Across all the setups, \textsc{Loci} improves in both localization, as measured by the ability to identify invalid updates, and recovery, as measured by the number of turns needed until successful task completion. Interestingly, we also observe that the same LLM that produced the misinterpretations performs substantially worse than \textsc{Loci} at locating them, even when explicitly asked to diagnose the
failed conversation. Overall, our results suggest that error localization and repair can benefit from a controller separate from the interacting model. More broadly, the ability to identify and selectively revise corrupted interpretations offers a step toward more reliable context management in long-horizon assistants and agents.

\paragraph{Limitations and Future Work.}
Our paper is subject to some limitations. First, we assume the user has a fixed target within a finite candidate set containing the true intent. While this is natural for structured tasks where candidate intents can be constructed from combinations of known attributes (e.g., dialogue slots or tool arguments), this setup is more restrictive if the possible intents are not known in advance or cannot be easily enumerated. One possible extension is to generate or retrieve candidate intents during the conversation, as discussed in \appendixref{app:candidate-intents}, but we leave this for future work.
Second, the assumption of a fixed target excludes interactions where the user revises their preferences, in which case a rejected proposal may reflect a change in the user's preferences instead of a misinterpretation.
Finally, our method requires specifying the prior reliability of the updates, but we observe that this choice has limited influence on recovery cost, which remains similar across the tested values (\appendixref{app:reliability-sensitivity}).
Beyond relaxing these assumptions, future work could explore interactions between \textsc{Loci}
and the underlying LLM. One intuitive way to do so would be providing the scores as additional context to the model that is trying to self-diagnose the misinterpretations. Some of our experiments also suggest that some scenarios might benefit from accounting for the future cost of recovering from discarded information, so this could be incorporated explicitly in the policy. Another interesting direction is to treat information acquisition and repair jointly, by learning a policy that accounts for the possibility of revising earlier steps when deciding what questions to ask next. This could encourage the assistant to ask more exploratory questions, knowing that wrong interpretations can later be identified and revised.

%% file: appendix/app_proofs.tex
\section{Theoretical Results}
\label{app:proofs}

Throughout this section, let $S:=\mathcal X\setminus\{\hat x\}$.

\subsection{Single-Error Localization}
\label{sec:proof-single-error}

\begin{proof}[Proof of Lemma~\ref{lem:single-error-localization}]
For each update $i$, define its single-error witness set
\[
W_i
:=
\{x\in S:\tilde\ell_i(x)=0,\ \tilde\ell_j(x)=1
\text{ for every }j\neq i\}.
\]
Thus, $|W_i|=N_i$. Under the target-excluding error restriction in
\Cref{sec:problem-formulation}, $M_j=\tilde\ell_j(X)$, so $E$ holds exactly
when $X$ violates one applied constraint and satisfies the rest. Since
$b$ is supported on $S$, $E$ is equivalent under $b$ to
$X\in\bigcup_{j=1}^T W_j$. The
sets $W_1,\ldots,W_T$ are
disjoint, and conditional on $E$, the event $\tau=i$ is equivalent to
$X\in W_i$.

For every $x\in W_i$, \Cref{eq:target-posterior} assigns the
unnormalized belief weight
\[
p_0(x)(1-\rho)\kappa\rho^{T-1}.
\]
This weight is the same positive constant for every single-error witness because
$p_0(x)$ is constant and positive on $S$. Therefore,
\[
\Pr_b(\tau=i\mid E)
=
\frac{|W_i|}{\sum_{j=1}^T|W_j|}
=
\frac{N_i}{N},
\]
where the denominator is positive by assumption.
\end{proof}

\subsection{Rare-Error Extension}
\label{sec:proof-rare-error}

The preceding proof conditions on exactly one invalid update. We now consider
the unrestricted belief as the assumed error parameter $p=1-\rho$ approaches zero,
holding the applied constraints, candidate set $S$, and error weights fixed.

\begin{proposition}[Rare-error limit]
\label{prop:rare-error-limit}
Suppose $p_0(x)$ is constant and positive on $S$. Fix $\kappa>0$, and for each $p\in(0,1)$
let $\rho_i=1-p$ and $\kappa_i=\kappa$ for every update $i$. Let $c_i(p)$
denote the corruption probability in
\Cref{eq:corruption} under these parameters, and assume
$N:=\sum_{j=1}^T N_j>0$. Then $\sum_{j=1}^T c_j(p)>0$, and
\[
\bar c_i(p)
:=
\frac{c_i(p)}{\sum_{j=1}^T c_j(p)}
=
\frac{N_i}{N}+O(p)
\qquad\text{as }p\downarrow0
\]
for every update $i$. Moreover, for every pair $i,j$ with $N_i>N_j$,
\[
c_i(p)>c_j(p)
\qquad\text{whenever}\qquad
0<p<\frac{1}{1+\kappa|S|}.
\]
\end{proposition}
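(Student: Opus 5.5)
The plan is to work with unnormalized belief weights and observe that both the ratio $\bar c_i(p)$ and the comparison $c_i(p)>c_j(p)$ are invariant to the normalizing constant of \Cref{eq:target-posterior}. For $x\in S$, let $k(x):=|\{i:\tilde\ell_i(x)=0\}|$ be the number of applied constraints that $x$ violates. Under the stated parameters, \Cref{eq:update-likelihood} gives the unnormalized weight
\[
w(x)=p_0(x)(1-p)^{T-k(x)}(p\kappa)^{k(x)} .
\]
The constant $p_0$ on $S$ and the factor $p_0(1-p)^{T-1}p\kappa$ are common to all candidates with $k(x)\ge 1$. After dividing them out, the quantity $\tilde c_i(p)$, which is proportional to $c_i(p)$ with an $i$-independent positive factor, becomes
\[
\tilde c_i(p)=N_i+R_i(p),\qquad
R_i(p):=\sum_{x\in S:\ \tilde\ell_i(x)=0,\ k(x)\ge2} r^{\,k(x)-1},\qquad r:=\frac{p\kappa}{1-p}.
\]
Candidates with $k(x)=1$ that violate $i$ are exactly the single-error witnesses of $i$ from the proof of Lemma~\ref{lem:single-error-localization}, so they contribute $N_i$. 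Candidates with $k(x)=0$ contribute to no score.

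First I would handle well-definedness and positivity. Since $N>0$, some $x\in S$ has $k(x)=1$ and hence $w(x)>0$. So the denominator of $b$ is positive and $b$ is defined. That $x$ lies in some witness set $W_i$, so $c_i(p)\ge b(x)>0$, and therefore $\sum_j c_j(p)>0$.

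For the asymptotic claim, write
\[
\bar c_i(p)=\frac{N_i+R_i(p)}{N+\sum_j R_j(p)} .
\]
When $r\le1$ we have $r^{k-1}\le r$ for $k\ge2$, so $0\le R_i(p)\le |S|\,r$. Also $r=O(p)$ as $p\downarrow0$. A first-order expansion of the quotient, using $N>0$, then gives $\bar c_i(p)=N_i/N+O(p)$, with an explicit constant such as $(1+T)|S|\kappa/N$ for small $p$.

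For the ordering claim, take $N_i>N_j$. Since these are integers, $N_i\ge N_j+1$. Then
\[
\tilde c_i-\tilde c_j\ge N_i-N_j-R_j(p)\ge 1-|S|\,r,
\]
using $R_i\ge0$ and the bound on $R_j$. The condition $p<1/(1+\kappa|S|)$ rearranges to $p\kappa|S|<1-p$, that is, $|S|\,r<1$. This also forces $r<1$, which justifies the bound $r^{k-1}\le r$ used above. Hence $\tilde c_i>\tilde c_j$, and therefore $c_i(p)>c_j(p)$.

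The only step needing care is this last bound: one must check that the multi-violation tail $R_j$ is uniformly below the integer gap. The crude estimate $R_j\le|S|\,r$ is exactly what produces the stated threshold, so I expect no deeper obstacle.
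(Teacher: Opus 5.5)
Your proof is correct and follows essentially the same route as the paper. The paper writes $c_i(p)=A_i(q_p)/Z_p$ with $A_i(q)=N_iq+O(q^2)$ and bounds the multi-violation remainder by $|S|\,q_p^2$. That is exactly your decomposition $\tilde c_i=N_i+R_i$ with $0\le R_j\le|S|\,r$, once you divide through by one extra factor of $r=q_p$, and it yields the same threshold $p<1/(1+\kappa|S|)$.
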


\begin{proof}
For $x\in S$, let
\[
d(x)
:=
\sum_{t=1}^T\bigl(1-\tilde\ell_t(x)\bigr),
\qquad
q_p:=\frac{p\kappa}{1-p}.
\]
A satisfied constraint contributes weight $1-p$, while a violated
constraint contributes weight $p\kappa$. Since $p_0(x)$ is constant and positive on $S$,
\Cref{eq:target-posterior} becomes
\[
b_p(x)
=
\frac{q_p^{d(x)}}{Z_p},
\qquad
Z_p
:=
\sum_{x'\in S}q_p^{d(x')}.
\]
The assumption $N>0$ implies that $S$ is nonempty, so $Z_p>0$.

Define
\[
A_i(q)
:=
\sum_{x\in S:\,\tilde\ell_i(x)=0}q^{d(x)}.
\]
Then \Cref{eq:corruption} gives $c_i(p)=A_i(q_p)/Z_p$. The coefficient of $q$ in $A_i(q)$ is $N_i$
because the corresponding candidates violate update $i$ and no other update.
Since $S$ is finite,
\[
A_i(q)=N_iq+O(q^2).
\]
Moreover,
\[
D(q)
:=
\sum_{j=1}^T A_j(q)
=
\sum_{x\in S}d(x)q^{d(x)}
=
N q+O(q^2).
\]
For $q>0$, the assumption $N>0$ gives $D(q)>0$. Hence
$\sum_j c_j(p)=D(q_p)/Z_p>0$, and
\[
\bar c_i(p)
=
\frac{A_i(q_p)}{D(q_p)}
=
\frac{N_i+O(q_p)}{N+O(q_p)}
=
\frac{N_i}{N}+O(q_p).
\]
Finally, $q_p=p\kappa/(1-p)=O(p)$ as $p\downarrow0$, which proves the
first claim.

For the ranking claim, fix $i,j$ with $N_i>N_j$. We have
\[
Z_p\bigl(c_i(p)-c_j(p)\bigr)
=
(N_i-N_j)q_p+R_{ij}(q_p),
\]
where
\[
R_{ij}(q)
:=
\sum_{x\in S:\,d(x)\ge2}
\left(
\mathbf 1\{\tilde\ell_i(x)=0\}
-
\mathbf 1\{\tilde\ell_j(x)=0\}
\right)q^{d(x)}.
\]
If $0<q<1$, then $|R_{ij}(q)|\le |S|q^2$. Suppose $q_p<1/|S|$. Since
$S$ is nonempty, $1/|S|\le1$, so the bound on $R_{ij}$ applies at
$q=q_p$, and $N_i-N_j\ge1$ gives
\[
Z_p\bigl(c_i(p)-c_j(p)\bigr)
\ge
q_p-|S|q_p^2
=
q_p\bigl(1-|S|q_p\bigr)
>0.
\]
The condition $q_p<1/|S|$ is equivalent to
\[
p<\frac{1}{1+\kappa|S|}.
\]
Since $Z_p>0$, the ranking claim follows. The normalized scores
$\bar c_i(p)$ preserve the same ordering because they divide every
$c_i(p)$ by the common positive factor $\sum_{j=1}^T c_j(p)$.
\end{proof}

%% file: appendix/app_methodology.tex
\section{Additional Experimental Results }\label{app:results}

\subsection{Controlled Localization and Repair}
\label{app:controlled-results}

We use the candidate sets from the 20Q benchmark of \citet{NEURIPS2024_2b0e14ab}, spanning four domains: Animals (27 candidates), Food (26), Objects (37), and Places (21). For each domain we wrote a pool of yes/no questions and encoded each question as a binary predicate over the candidates, giving an attribute matrix with one row per candidate and one column per question, where entry $(x,q)$ is the true answer to question $q$ for candidate $x$. The matrix determines the truthful answers and the candidate-set updates and each target can be uniquely identified through the questions. 

Under the Natural question policy of \Cref{sec:results-controlled}, a language model uses the question text to select among the available predicates. In this section we report a series of robustness checks to our main results. In particular, we report localization and recovery results across different question-selection
policies, evaluate recovery with a larger candidate set, and examine sensitivity to the assumed update reliability and variants of the repair controller.

Throughout, evaluation is conditioned on an initial failure. For localization, Hit@1 is the fraction of scored histories whose top-ranked update is invalid. Average precision is computed for each history by averaging the precision at the ranks occupied by invalid updates. Equal scores are ordered by earlier turn. For repair, we measure the mean number of additional interactions required to recover: each question, verification and subsequent wrong guess costs one, while masking and the final correct guess cost zero. Costs exclude the preceding interaction and the initial rejection. 

We set the assumed reliability to $\rho_i=1-p$ for the per-turn error-injection experiments and $\rho_i=0.9$ for the single-injection experiments. The experiments use error weights $\kappa_i=1$.

\subsubsection{Localization Across Question Policies}
\label{app:policies}

\Cref{tab:loc-policy-natural,tab:loc-policy-voi,tab:loc-policy-random} report localization results under three question policies: Natural, VoI, and Random. The assistant asks predicates from the fixed pool. Under Natural, the language model picks the next question given the question history and the candidates that survive its current state; under VoI it picks the predicate with the largest expected information gain over its current candidate set; under Random it picks uniformly among unasked predicates that still split its candidate set.

In every case, the user answers truthfully from the attribute matrix. An admissible replacement predicate excludes the target while retaining at least one current candidate. When such replacements exist, we select one uniformly with probability $p$ and apply the intended update with probability $1-p$. If no admissible replacement exists, we apply the intended update. We evaluate unsuccessful histories, including those ending with a rejected guess, an empty candidate set or exhaustion of the question budget. Invalid updates are the turns whose applied constraint excludes the target; $K$ is their number and $T$ the conversation length at failure.

\begin{table}[htpb]
  \centering \small \setlength{\tabcolsep}{3pt}
  \resizebox{\linewidth}{!}{\input{tables/localization_policy_natural}}
  \caption{Localization under the Natural question policy (mean $\pm$ SE). Higher is better; bold marks the highest mean for each metric in each column.}
  \label{tab:loc-policy-natural}
\end{table}

\begin{table}[htpb]
  \centering \small \setlength{\tabcolsep}{3pt}
  \resizebox{\linewidth}{!}{\input{tables/localization_policy_voi}}
  \caption{Localization under the VoI question policy (mean $\pm$ SE). Higher is better; bold marks the highest mean for each metric in each column.}
  \label{tab:loc-policy-voi}
\end{table}

\begin{table}[htpb]
  \centering \small \setlength{\tabcolsep}{3pt}
  \resizebox{\linewidth}{!}{\input{tables/localization_policy_random}}
  \caption{Localization under the Random question policy (mean $\pm$ SE). Higher is better; bold marks the highest mean for each metric in each column.}
  \label{tab:loc-policy-random}
\end{table}

\subsubsection{Recovery Across Question Policies}
\label{app:recovery-policies}
\Cref{tab:repair-perfect-natural,tab:repair-perfect-random,tab:repair-perfect-voi} report repair results under the three question policies. The tables share the same repair methods and error-rate scenarios. All the baselines that support localization (i.e., all but Restart) are evaluated under the always-\texttt{VERIFY} policy. For \textsc{Loci} scores, as they support uncertainty quantification, we also evaluate the adaptive policy, together with the always-\texttt{MASK} policy. 

Questions, verifications and later wrong guesses cost one; masks and the correct guess cost zero. For each domain and error-rate scenario, we fit 26 threshold pairs separately for each adaptive variant using five target folds. Every repetition of a target stays in the same fold, and we select the lowest training mean cost.

\begin{table}[htpb]
  \centering\small\setlength{\tabcolsep}{3pt}
  \scalebox{0.9554}{\input{tables/repair_perfect_natural}}
  \caption{Recovery under the Natural question policy (mean $\pm$ SE). Lower is better; bold marks the lowest mean in each column. $^{*}$The \texttt{MASK} mean at $p=0.5$ excludes 12 episodes that reached the action limit.}
  \label{tab:repair-perfect-natural}
\end{table}

\begin{table}[htpb]
  \centering\small\setlength{\tabcolsep}{3pt}
  \scalebox{0.9554}{\input{tables/repair_perfect_voi}}
  \caption{Recovery under the VoI question policy (mean $\pm$ SE). Lower is better; bold marks the lowest mean in each column.}
  \label{tab:repair-perfect-voi}
\end{table}

\begin{table}[htpb]
  \centering\small\setlength{\tabcolsep}{3pt}
  \scalebox{0.9554}{\input{tables/repair_perfect_random}}
  \caption{Recovery under the Random question policy (mean $\pm$ SE). Lower is better; bold marks the lowest mean in each column.}
  \label{tab:repair-perfect-random}
\end{table}

\subsubsection{Larger Candidate Sets}
\label{app:repair-cub}

To evaluate repair with a larger candidate set, we use the 200 bird species in CUB-200-2011 \citep{wah2011caltech}. We represent each species by yes/no attributes, marking an attribute as present when it occurs in at least 50\% of that species' annotated examples. Removing uninformative and redundant attributes leaves 169 possible questions.  Assumed reliability is fixed at $\rho=1-p$, with unit error weights $\kappa_i=1$. 
 \Cref{tab:repair-cub} reports mean additional interactions conditional on initial failure. 

\begin{table}[htpb]
  \centering\small\setlength{\tabcolsep}{3pt}
  \scalebox{0.9554}{\input{tables/repair_cub_confirmation}}
  \caption{Recovery on CUB (mean $\pm$ SE). Lower is better; bold marks the lowest mean in each column.}
  \label{tab:repair-cub}
\end{table}

\subsection{Free-Form Language-Model Interactions}
\label{app:freeform-results}

We generate free-form conversations on the 20Q game from \citet{NEURIPS2024_2b0e14ab} with Llama 3.1 8B
playing both roles. The assistant writes its own yes/no questions and is not restricted to the
predicate pool; the simulated user answers each question for the hidden target, which belongs to the fixed candidate set for each domain.  Each conversation allows up to 20 questions, and a conversation counts as failed when the assistant does not identify the target within this budget.

\subsubsection{Localization Results}
\label{app:free-form}

Free-form conversations do not provide ground-truth labels for invalid updates, hence localization accuracy is hard to assess. For this, we use a separate LLM judge to construct reference labels for localization. These labels depend on the judge's accuracy and interpretation of ambiguous questions. This motivates the controlled evaluation with injected mistakes in \Cref{sec:results-controlled} and the recovery comparison under an always-\texttt{VERIFY} policy in \Cref{sec:free-form}.
Nevertheless, here we present some results to assess the localization quality using an LLM judge. 

We use Fable 5 to separately annotate the answers for each question, given the known target. We label a turn invalid when the recorded user answer contradicts the judge's unambiguous answer. If it labels a question as ambiguous, then we do not mark it as invalid. We score the 20 failed conversations with at least one judge-labeled invalid turn. \Cref{tab:freeform-localization} reports Hit@1 and AP against these reference labels.

\begin{table}[htpb]
  \centering
  \small
  \scalebox{0.9554}{%
  \begin{tabular}{lcc}
    \toprule
     & Hit@1 & AP \\
    \midrule
    First & $0.200 \pm 0.089$ & $0.353 \pm 0.037$ \\
    Last & $0.450 \pm 0.108$ & $0.582 \pm 0.052$ \\
    Random & $0.350 \pm 0.106$ & $0.446 \pm 0.038$ \\
    \midrule
    Counting & $0.200 \pm 0.089$ & $0.353 \pm 0.038$ \\
    \textsc{Loci} & $\mathbf{0.750 \pm 0.098}$ & $\mathbf{0.643 \pm 0.037}$ \\
    \bottomrule
  \end{tabular}}
  \caption{Localization on 20 failed conversations in the Animals domain using reference labels from an LLM judge (mean $\pm$ SE).}
  \label{tab:freeform-localization}
\end{table}

\subsubsection{LLM-Based Localization}
\label{app:llama-selector}

Here, we test whether the interacting language model can identify which earlier updates need repair. We compare \textsc{Loci} with Llama 3.1 8B to select the location of errors in free-form conversations. The LLM receives the recorded questions and answers, the candidate sets retained by each update, rejected guesses, and each update's repair status, but does not see the hidden target, reference error labels, or \textsc{Loci} scores. It ranks unresolved updates by how likely they are to have excluded the target. The ranking is reused during consecutive verification checks, and a new failure triggers a new ranking. Here, the methods use the same always-\texttt{VERIFY} controller to ensure fairness in the comparison. 

\begin{table}[htpb]
  \centering\small\setlength{\tabcolsep}{3pt}
  \scalebox{0.9554}{\input{tables/freeform_llama_selector}}
  \caption{Free-form recovery under different update selectors with always-\texttt{VERIFY}, measured by interaction score (mean $\pm$ SE). Lower is better; bold marks the lowest mean in each column.}
  \label{tab:freeform-llama-selector}
\end{table}

In \Cref{tab:freeform-llama-selector}, \textsc{Loci} has the lowest mean cost in every domain. On average it required $12.47$ interactions, compared with $21.19$ for Llama, a $41.1\%$ reduction. Most of the saving comes from fewer verification checks: Llama requires $18.47$ checks on average and \textsc{Loci} requires $10.96$. We note that, though this comparison evaluates localization through recovery cost, it also provides a proxy for error-location accuracy given the fixed policy. 

The bad localization performance of Llama may be due to the fact that the LLM selector shares some of the errors that produced the failed interaction. In fact, the same model constructs the candidate-set updates and later tries to diagnose them, so a mistaken belief or interpretation may persist during diagnosis. If the model failed during the conversation, in principle, it may also be unable to diagnose that failure. This motivates the external repair controller approach that we propose in this paper. In fact, we see \textsc{Loci} performing better. Moreover, we note that a stronger or different model may improve diagnosis, and providing it with \textsc{Loci} scores may combine the benefits of both approaches. These possibilities remain untested.

\subsubsection{Recovery Results}
\label{app:freeform-repair}

We follow the free-form setup in \Cref{sec:results-freeform}. \Cref{tab:recovery-summary} reports recovery costs, and \Cref{fig:controlled-repair-components} separates them into questions, verification checks, and rejected guesses. 
Verification accounts for 90\% of the always-\texttt{VERIFY} cost in free-form conversations, compared with 42\% in the controlled setting. Adaptive repair saves 7.17 checks on average while adding 3.34 questions and rejected guesses, yielding a lower total cost. Compared with always-\texttt{MASK}, it requires fewer questions and rejected guesses, although it incurs verification costs. 

\begin{figure}[htpb]
  \centering
  \includegraphics[width=\linewidth]{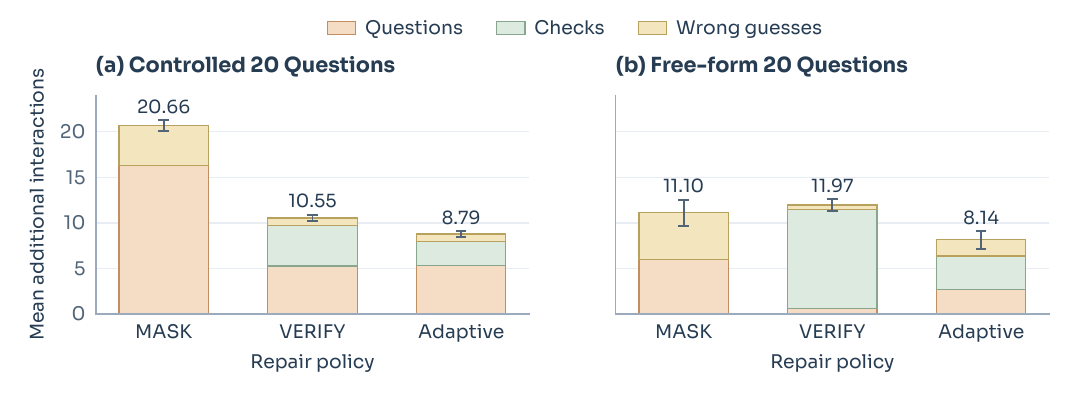}
  \caption{\textbf{Recovery cost under different \textsc{Loci} repair policies.} Stacked bars show additional questions, verification checks, and rejected guesses for always-\texttt{MASK}, always-\texttt{VERIFY}, and adaptive repair in (a) controlled 20Q and (b) free-form conversations. Means weight the four domains equally.}
  \label{fig:controlled-repair-components}
\end{figure}

For robustness, we repeat the recovery evaluation with Qwen3-8B (with thinking disabled). The user retains the conversation history. We use the same four domains, hyperparameters, interaction costs, and budget of 100. Each model generates its own failure cohort, so we compare methods within each model. \Cref{tab:freeform-repair-qwen} shows the results. \textsc{Loci} with always-\texttt{VERIFY} reduces mean recovery cost by 34.3\% relative to Counting with the same action. Adaptive repair has the lowest mean cost in each domain and reduces the equal-domain mean by 51.1\% relative to Counting. All always-\texttt{VERIFY} methods and adaptive \textsc{Loci} recover all 87 failures; always-\texttt{MASK} recovers 86 and Restart recovers 69. Runs that do not recover within the interaction budget are assigned a cost of 100.

\begin{table}[htpb]
\centering\small
\setlength{\tabcolsep}{3pt}
\scalebox{0.9554}{%
\begin{tabular}{lrrrrr}
\toprule
Method & Average & Animals & Food & Objects & Places \\
\midrule
First + \texttt{VERIFY} & $21.22 \pm 0.84$ & $21.39 \pm 1.57$ & $20.45 \pm 1.66$ & $25.30 \pm 1.50$ & $17.71 \pm 1.88$ \\
Last + \texttt{VERIFY} & $20.91 \pm 1.13$ & $19.00 \pm 1.48$ & $18.91 \pm 1.36$ & $24.52 \pm 1.69$ & $21.21 \pm 3.56$ \\
Random + \texttt{VERIFY} & $22.17 \pm 1.03$ & $21.22 \pm 1.36$ & $19.95 \pm 1.75$ & $25.94 \pm 1.64$ & $21.57 \pm 2.98$ \\
Counting + \texttt{VERIFY} & $20.18 \pm 1.03$ & $20.61 \pm 1.45$ & $19.23 \pm 1.93$ & $24.52 \pm 1.47$ & $16.36 \pm 2.92$ \\
Restart & $47.57 \pm 3.36$ & $43.11 \pm 7.03$ & $43.18 \pm 6.55$ & $70.79 \pm 6.46$ & $33.21 \pm 7.04$ \\
\textsc{Loci} + \texttt{VERIFY} & $13.26 \pm 0.87$ & $12.78 \pm 1.48$ & $9.86 \pm 1.21$ & $20.12 \pm 2.08$ & $10.29 \pm 1.92$ \\
\textsc{Loci} + \texttt{MASK} & $16.23 \pm 1.75$ & $15.39 \pm 3.82$ & $10.14 \pm 2.38$ & $28.91 \pm 4.56$ & $10.50 \pm 3.14$ \\
\textsc{Loci} adaptive & $\mathbf{9.86 \pm 1.03}$ & $\mathbf{6.94 \pm 1.33}$ & $\mathbf{6.27 \pm 1.08}$ & $\mathbf{18.15 \pm 3.07}$ & $\mathbf{8.07 \pm 2.14}$ \\
\bottomrule
\end{tabular}}
\caption{Free-form recovery with Qwen3-8B, measured by interaction score (mean $\pm$ SE). Lower is better; bold marks the lowest mean in each column.}
\label{tab:freeform-repair-qwen}
\end{table}

\subsection{Task-Oriented Dialogue}
\label{app:task-dialogue}

\subsubsection{Setup}
\label{app:multiwoz-setup}

We use the hotel, restaurant, and attraction search requirements in MultiWOZ 2.4. Each slot is a named requirement, such as hotel area or price range, and each candidate intent $x\in\mathcal X$ assigns values to the specified slots; unspecified or unrestricted slots do not impose any constraints. The set of specified slots is known. We construct $\mathcal X$ by combining configurations that are supported by the database across domains. Each exchange contributes one constraint containing its newly specified values, e.g. ``a cheap hotel in the east'' gives \texttt{hotel.price} $=$ \texttt{cheap} and \texttt{hotel.area} $=$ \texttt{east}. 
If a requirement is revised or removed, we truncate the dialogue before that turn to preserve a fixed target throughout the interaction. We retain dialogues with at least two turns that introduce requirements, so localization involves a choice among updates. These restrictions leave us with 346 test dialogues.

Each original update is corrupted independently with probability $p$. For a corrupted update, we choose one of its slots uniformly and replace its value with a uniformly chosen alternative from that slot's database vocabulary. Let $k_i$ be the number of slots in update $i$ and $m_s$ the vocabulary size of slot $s$. For a candidate $x$ incompatible with the recorded update, this mechanism gives the error weight
\[
q_i(x)=
\begin{cases}
1/[k_i(m_s-1)], & \text{if only slot }s\text{ differs},\\
0, & \text{if multiple slots differ}.
\end{cases}
\]
We use assumed reliabilities $\rho_i=1-p$ for these original updates and a uniform prior over $\mathcal X$, conditioning on rejection of the recorded requirement configuration. For the rest, we follow the controlled evaluation of \Cref{sec:results-controlled}, with the difference that errors change slot values rather than the binary answers.

\subsubsection{Localization Results}
\label{app:multiwoz}

 We evaluate localization before repair using Hit@1 and AP on the same failed histories used for recovery. \Cref{tab:multiwoz-localization} reports the results. \textsc{Loci} consistently outperforms the baselines across the different error-rate scenarios. 

\begin{table}[htpb]
\centering
\caption{Localization on MultiWOZ (mean $\pm$ SE). Higher is better; bold marks the highest mean for each metric in each column. The Average column weights the five error rates equally.}
\label{tab:multiwoz-localization}
\small\setlength{\tabcolsep}{3pt}
\scalebox{0.9554}{\input{tables/multiwoz_localization}}
\end{table}

\subsubsection{Recovery Results}
\label{app:multiwoz-repair}

Recovery follows the repair policies and interaction costs described in \Cref{sec:repair}. New slot answers during recovery are reliable and restrict the candidate set.  \Cref{tab:multiwoz-recovery} reports recovery costs on the same histories used for localization. All methods recover every history within the interaction budget.

\begin{table}[htpb]
\centering\small
\setlength{\tabcolsep}{3pt}
{\scalebox{0.9554}{\input{tables/multiwoz_recovery}}}
\caption{Recovery on MultiWOZ, measured by interaction score (mean $\pm$ SE). Lower is better; bold marks the lowest mean in each column. The Average column weights the five error rates equally.}
\label{tab:multiwoz-recovery}
\end{table}

\textsc{Loci} beats the baselines across all the error rates, with always-\texttt{MASK} achieving the lowest mean cost. \Cref{tab:multiwoz-recovery-components} shows that adaptive repair saves questions and wrong guesses, but these savings are smaller than its verification costs. This is because reliable slot answers may make discarded information relatively cheap to recover here. Always-\texttt{MASK} also resumes questioning once candidates remain, while adaptive repair can continue checking updates. This suggests that the value of verification depends on both corruption confidence and the cost of recovering information through the subsequent turns of the conversation.

\begin{table}[htbp]
\centering\small
\setlength{\tabcolsep}{4pt}
\scalebox{0.9554}{\input{tables/multiwoz_recovery_components}}
\caption{Recovery-cost components for \textsc{Loci} (mean $\pm$ source-dialogue-cluster bootstrap SE). Means weight error rates equally.}
\label{tab:multiwoz-recovery-components}
\end{table}

\subsection{Candidate Intent Spaces}
\label{app:candidate-intents}

\textsc{Loci} attributes errors using a belief over possible user intents and their compatibility with each recorded update. In our experiments, we use finite candidate spaces containing the target, which permit exact computation of the scores under the specified model. For tasks where the candidate space is unknown, retrieval or generation could construct a working set during the interaction. \citet{NEURIPS2024_2b0e14ab} provide an approach to generalize this: an LLM proposes a finite candidate set using the conversation history and refreshes it after each interaction. An analogous extension for \textsc{Loci} would propose alternative intents using the original exchanges and rejection feedback, evaluate the recorded constraints on these candidates, and assign weights approximating the intent posterior. The same attribution rule would then apply to this representation. Candidate generation must allow alternatives that contradict the assistant's recorded interpretations, since these alternatives explain how an earlier update could have excluded the target.

Localization can also precede identification of the exact intent. By \Cref{eq:corruption}, candidates with the same pattern of compatibility across recorded updates can have their posterior probabilities combined without changing any corruption score. For example, several flights departing after 6 p.m.\ can all provide evidence against an incorrectly recorded requirement to depart before 6 p.m., even while the intended flight remains uncertain. A candidate representation that preserves the posterior mass assigned to each compatibility pattern therefore preserves the localization scores. Constructing such a representation requires accounting for the weights of the alternatives it represents.
These extensions could apply to interactions where the user has a fixed target but the candidate space is not predefined. The quality of the resulting scores would depend on candidate coverage, weighting, and accurate constraint evaluation. We leave these extensions for future work.

\subsection{Ablations}
\subsubsection{Sensitivity to Assumed Reliability}
\label{app:reliability-sensitivity}

To assess sensitivity to the choice of $\rho$ in \Cref{eq:update-likelihood}, we vary the common assumed reliability over $\{0.5,0.7,0.9,0.95\}$ while holding the initial histories fixed. We test its effect on controlled localization and on free-form recovery with fixed repair thresholds. Results are reported in \Cref{tab:reliability-localization}, which also includes Counting as a reference. 

\begin{table}[htpb]
  \centering\small\setlength{\tabcolsep}{3pt}
  \scalebox{0.9554}{\input{tables/reliability_localization}}
  \caption{Controlled localization under different assumed reliabilities (mean $\pm$ SE). Higher is better. Each column uses the same failed histories for every row.}
  \label{tab:reliability-localization}
\end{table}

Overall, we observe that localization depends on the prior reliability in the controlled setup. For example, at $p=0.1$, reducing $\rho$ from $0.9$ to $0.5$ lowers AP from $0.694$ to $0.609$, while at $p=0.5$, increasing $\rho$ from $0.5$ to $0.9$ lowers Hit@1 from $0.744$ to $0.643$ and AP from $0.808$ to $0.760$. No fixed reliability performs best throughout the grid.

The belief weights explain why high reliability can hurt when many updates are invalid. With unit error weights $\kappa_i=1$, each additional violated constraint multiplies a candidate's relative weight by $(1-\rho)/\rho$. At $\rho=0.9$, two additional violations incur an $81$-fold penalty. A true target excluded by several corrupted updates can therefore receive little posterior mass, reducing its contribution to their corruption scores. Lower reliability weakens this penalty, but also weakens the information supplied by valid updates.

For recovery, we reuse the failed Llama 3.1 8B conversations and recorded candidate classifications from \Cref{app:freeform-repair}. We rerun \textsc{Loci} + \texttt{VERIFY} and adaptive repair at each reliability. The adaptive thresholds selected on other target folds at $\rho=0.9$ remain fixed throughout. Results are in \Cref{tab:reliability-recovery}. 
Mean always-\texttt{VERIFY} cost ranges from $12.17$ to $12.32$ interactions, and adaptive cost from $8.36$ to $9.79$. Every paired 95\% bootstrap interval for a cost change relative
to $\rho=0.9$ includes zero.

\begin{table}[htpb]
  \centering\small\setlength{\tabcolsep}{8pt}
  \scalebox{0.9554}{\input{tables/reliability_recovery}}
  \caption{Free-form recovery under different assumed reliabilities, measured by additional interactions (mean $\pm$ target-cluster bootstrap SE). Means weight domains equally.}
  \label{tab:reliability-recovery}
\end{table}

\subsubsection{Controller Ablations}
\label{app:controller-ablations}

\Cref{tab:natural-controller-ablation} compares evidence retention and repair stopping rules on the same Natural-policy histories. Adaptive retain keeps masked updates as evidence for posterior inference; adaptive forget discards them. Feasibility-stop retains this evidence and resumes questioning when masking leaves a nonempty candidate set or verification confirms an update. Each variant uses thresholds fitted separately on the same target folds. Mean recovery costs are similar across the three variants at each error rate.

\begin{table}[htpb]
  \centering\small\setlength{\tabcolsep}{5pt}
\scalebox{0.9554}{%
\begin{tabular}{lcccc}
\toprule
Controller & $p=0.1$ & $p=0.2$ & $p=0.3$ & $p=0.4$ \\
\midrule
Adaptive retain & $6.31 \pm 0.34$ & $7.66 \pm 0.34$ & $9.50 \pm 0.37$ & $11.69 \pm 0.47$ \\
Adaptive forget & $6.35 \pm 0.34$ & $7.75 \pm 0.33$ & $9.51 \pm 0.37$ & $11.64 \pm 0.47$ \\
Feasibility-stop & $6.30 \pm 0.34$ & $7.61 \pm 0.34$ & $9.50 \pm 0.37$ & $11.61 \pm 0.47$ \\
\bottomrule
\end{tabular}}
  \caption{Recovery under the Natural question policy with different evidence retention and stopping rules (mean $\pm$ SE).}
  \label{tab:natural-controller-ablation}
\end{table}

\subsubsection{Effect of User Conversation History}
\label{app:user-history}

\Cref{tab:freeform-repair-nohist} compares recovery with and without user history. With history, the simulated user sees the conversation; without it, the user sees only the target and current question. The assistant retains its unmasked conversation in both settings. \textsc{Loci} adaptive has the lowest mean recovery cost in all four domains with user history and in two of four without it.

\begin{table}[htpb]
\centering\small
\setlength{\tabcolsep}{5pt}
\scalebox{0.9554}{%
\begin{tabular}{lrrrr}
\toprule
Method & Animals & Food & Objects & Places \\
\midrule
\multicolumn{5}{c}{User with conversation history} \\
\midrule
First + \texttt{VERIFY} & $22.29 \pm 1.79$ & $20.62 \pm 1.38$ & $25.33 \pm 1.76$ & $20.27 \pm 1.77$ \\
Last + \texttt{VERIFY} & $19.71 \pm 1.28$ & $18.57 \pm 1.12$ & $23.67 \pm 1.62$ & $18.73 \pm 1.57$ \\
Random + \texttt{VERIFY} & $20.67 \pm 0.94$ & $19.95 \pm 0.94$ & $26.00 \pm 1.97$ & $20.91 \pm 1.50$ \\
Counting + \texttt{VERIFY} & $21.52 \pm 2.34$ & $18.62 \pm 0.90$ & $24.75 \pm 2.02$ & $19.00 \pm 1.56$ \\
Restart & $43.67 \pm 5.90$ & $46.71 \pm 6.88$ & $56.88 \pm 7.71$ & $20.09 \pm 6.47$ \\
\textsc{Loci} + \texttt{VERIFY} & $12.05 \pm 1.34$ & $11.19 \pm 1.03$ & $15.29 \pm 1.41$ & $9.36 \pm 1.22$ \\
\textsc{Loci} + \texttt{MASK} & $12.52 \pm 3.32$ & $10.48 \pm 2.64$ & $15.42 \pm 3.08$ & $6.00 \pm 2.23$ \\
\textsc{Loci} adaptive & $\mathbf{10.62 \pm 2.50}$ & $\mathbf{7.10 \pm 1.40}$ & $\mathbf{10.83 \pm 2.13}$ & $\mathbf{4.00 \pm 1.48}$ \\
\midrule
\multicolumn{5}{c}{User without conversation history} \\
\midrule
First + \texttt{VERIFY} & $17.52 \pm 0.96$ & $16.86 \pm 1.83$ & $18.68 \pm 1.06$ & $34.00 \pm 10.49$ \\
Last + \texttt{VERIFY} & $16.00 \pm 0.80$ & $18.43 \pm 1.31$ & $18.95 \pm 1.12$ & $36.00 \pm 10.00$ \\
Random + \texttt{VERIFY} & $17.29 \pm 0.85$ & $18.93 \pm 1.17$ & $18.11 \pm 0.96$ & $34.20 \pm 10.36$ \\
Counting + \texttt{VERIFY} & $15.38 \pm 1.13$ & $15.07 \pm 1.51$ & $14.68 \pm 1.14$ & $29.10 \pm 10.23$ \\
Restart & $47.76 \pm 7.58$ & $43.29 \pm 7.06$ & $48.42 \pm 9.25$ & $33.10 \pm 10.46$ \\
\textsc{Loci} + \texttt{VERIFY} & $7.10 \pm 0.79$ & $7.29 \pm 1.22$ & $\mathbf{7.42 \pm 1.08}$ & $19.80 \pm 9.49$ \\
\textsc{Loci} + \texttt{MASK} & $7.14 \pm 2.20$ & $4.36 \pm 1.77$ & $15.58 \pm 4.08$ & $\mathbf{17.30 \pm 9.30}$ \\
\textsc{Loci} adaptive & $\mathbf{4.38 \pm 1.54}$ & $\mathbf{3.93 \pm 1.30}$ & $8.21 \pm 1.61$ & $17.90 \pm 9.84$ \\
\bottomrule
\end{tabular}}
\caption{Recovery on LLM-generated conversations with and without user history, measured by interaction score (mean $\pm$ SE). Lower is better; bold marks the lowest mean in each column within each history setting.}
\label{tab:freeform-repair-nohist}
\end{table}

%% file: tables/localization_policy_natural.tex
\begin{tabular}{lcccccc}
\toprule
 & 1 error & $p=0.1$ & $p=0.2$ & $p=0.3$ & $p=0.4$ & $p=0.5$ \\
\midrule
Scored failures & 4,275 & 565 & 822 & 970 & 1,038 & 1,066 \\
Mean $T$ & 4.78 & 6.60 & 6.28 & 5.90 & 5.36 & 5.24 \\
Mean $K$ & 1.00 & 1.26 & 1.63 & 1.97 & 2.26 & 2.70 \\
\midrule
\multicolumn{7}{l}{Hit@1} \\
\midrule
First & $0.313 \pm 0.016$ & $0.225 \pm 0.019$ & $0.273 \pm 0.015$ & $0.344 \pm 0.017$ & $0.433 \pm 0.018$ & $0.526 \pm 0.017$ \\
Last & $0.352 \pm 0.011$ & $0.360 \pm 0.026$ & $0.353 \pm 0.022$ & $0.376 \pm 0.020$ & $0.423 \pm 0.019$ & $0.440 \pm 0.018$ \\
Random & $0.296 \pm 0.009$ & $0.256 \pm 0.020$ & $0.310 \pm 0.016$ & $0.392 \pm 0.015$ & $0.474 \pm 0.015$ & $0.549 \pm 0.015$ \\
\cmidrule(lr){1-7}
Counting & $\mathbf{0.737 \pm 0.020}$ & $0.500 \pm 0.023$ & $\mathbf{0.530 \pm 0.022}$ & $0.564 \pm 0.021$ & $0.628 \pm 0.022$ & $0.628 \pm 0.019$ \\
\textsc{Loci} & $0.719 \pm 0.019$ & $\mathbf{0.509 \pm 0.024}$ & $0.528 \pm 0.021$ & $\mathbf{0.586 \pm 0.020}$ & $\mathbf{0.674 \pm 0.021}$ & $\mathbf{0.744 \pm 0.021}$ \\
\midrule
\multicolumn{7}{l}{AP} \\
\midrule
First & $0.545 \pm 0.011$ & $0.443 \pm 0.016$ & $0.494 \pm 0.011$ & $0.558 \pm 0.011$ & $0.633 \pm 0.010$ & $0.703 \pm 0.009$ \\
Last & $0.583 \pm 0.007$ & $0.575 \pm 0.017$ & $0.585 \pm 0.013$ & $0.621 \pm 0.012$ & $0.660 \pm 0.010$ & $0.693 \pm 0.009$ \\
Random & $0.538 \pm 0.008$ & $0.486 \pm 0.014$ & $0.532 \pm 0.010$ & $0.586 \pm 0.009$ & $0.649 \pm 0.009$ & $0.705 \pm 0.008$ \\
\cmidrule(lr){1-7}
Counting & $\mathbf{0.849 \pm 0.012}$ & $0.682 \pm 0.015$ & $\mathbf{0.686 \pm 0.014}$ & $\mathbf{0.702 \pm 0.013}$ & $0.748 \pm 0.012$ & $0.760 \pm 0.010$ \\
\textsc{Loci} & $0.837 \pm 0.012$ & $\mathbf{0.693 \pm 0.016}$ & $0.685 \pm 0.014$ & $0.701 \pm 0.014$ & $\mathbf{0.755 \pm 0.013}$ & $\mathbf{0.808 \pm 0.011}$ \\
\bottomrule
\end{tabular}

%% file: tables/localization_policy_voi.tex
\begin{tabular}{lcccccc}
\toprule
 & 1 error & $p=0.1$ & $p=0.2$ & $p=0.3$ & $p=0.4$ & $p=0.5$ \\
\midrule
Scored failures & 2,527 & 2,013 & 3,351 & 4,274 & 4,824 & 5,178 \\
Mean $T$ & 3.77 & 4.70 & 4.69 & 4.69 & 4.69 & 4.71 \\
Mean $K$ & 1.00 & 1.21 & 1.46 & 1.77 & 2.11 & 2.51 \\
\midrule
\multicolumn{7}{l}{Hit@1} \\
\midrule
First & $0.396 \pm 0.020$ & $0.283 \pm 0.012$ & $0.328 \pm 0.009$ & $0.396 \pm 0.009$ & $0.468 \pm 0.009$ & $0.539 \pm 0.008$ \\
Last & $0.345 \pm 0.011$ & $0.383 \pm 0.016$ & $0.420 \pm 0.014$ & $0.472 \pm 0.012$ & $0.520 \pm 0.010$ & $0.601 \pm 0.010$ \\
Random & $0.377 \pm 0.010$ & $0.346 \pm 0.012$ & $0.368 \pm 0.009$ & $0.427 \pm 0.008$ & $0.492 \pm 0.007$ & $0.572 \pm 0.008$ \\
\cmidrule(lr){1-7}
Counting & $\mathbf{0.783 \pm 0.015}$ & $0.555 \pm 0.023$ & $0.571 \pm 0.021$ & $0.588 \pm 0.019$ & $0.608 \pm 0.017$ & $0.636 \pm 0.017$ \\
\textsc{Loci} & $0.773 \pm 0.016$ & $\mathbf{0.569 \pm 0.023}$ & $\mathbf{0.593 \pm 0.022}$ & $\mathbf{0.626 \pm 0.020}$ & $\mathbf{0.671 \pm 0.018}$ & $\mathbf{0.744 \pm 0.017}$ \\
\midrule
\multicolumn{7}{l}{AP} \\
\midrule
First & $0.636 \pm 0.012$ & $0.530 \pm 0.009$ & $0.563 \pm 0.007$ & $0.608 \pm 0.006$ & $0.657 \pm 0.006$ & $0.704 \pm 0.005$ \\
Last & $0.572 \pm 0.008$ & $0.587 \pm 0.011$ & $0.617 \pm 0.010$ & $0.656 \pm 0.008$ & $0.693 \pm 0.007$ & $0.743 \pm 0.006$ \\
Random & $0.608 \pm 0.008$ & $0.567 \pm 0.009$ & $0.586 \pm 0.007$ & $0.626 \pm 0.006$ & $0.668 \pm 0.005$ & $0.721 \pm 0.005$ \\
\cmidrule(lr){1-7}
Counting & $\mathbf{0.869 \pm 0.010}$ & $\mathbf{0.707 \pm 0.015}$ & $\mathbf{0.706 \pm 0.014}$ & $0.712 \pm 0.013$ & $0.728 \pm 0.011$ & $0.752 \pm 0.010$ \\
\textsc{Loci} & $0.855 \pm 0.011$ & $0.702 \pm 0.017$ & $0.703 \pm 0.016$ & $\mathbf{0.720 \pm 0.014}$ & $\mathbf{0.757 \pm 0.012}$ & $\mathbf{0.819 \pm 0.010}$ \\
\bottomrule
\end{tabular}

%% file: tables/localization_policy_random.tex
\begin{tabular}{lcccccc}
\toprule
 & 1 error & $p=0.1$ & $p=0.2$ & $p=0.3$ & $p=0.4$ & $p=0.5$ \\
\midrule
Scored failures & 3,416 & 2,616 & 3,915 & 4,665 & 5,038 & 5,267 \\
Mean $T$ & 6.94 & 6.87 & 6.41 & 6.04 & 5.78 & 5.62 \\
Mean $K$ & 1.00 & 1.32 & 1.67 & 2.05 & 2.47 & 2.89 \\
\midrule
\multicolumn{7}{l}{Hit@1} \\
\midrule
First & $0.221 \pm 0.018$ & $0.221 \pm 0.008$ & $0.282 \pm 0.007$ & $0.363 \pm 0.008$ & $0.448 \pm 0.007$ & $0.530 \pm 0.007$ \\
Last & $0.229 \pm 0.010$ & $0.295 \pm 0.009$ & $0.340 \pm 0.009$ & $0.401 \pm 0.009$ & $0.474 \pm 0.010$ & $0.551 \pm 0.011$ \\
Random & $0.228 \pm 0.012$ & $0.249 \pm 0.009$ & $0.311 \pm 0.007$ & $0.388 \pm 0.008$ & $0.473 \pm 0.009$ & $0.559 \pm 0.008$ \\
\cmidrule(lr){1-7}
Counting & $\mathbf{0.697 \pm 0.016}$ & $\mathbf{0.538 \pm 0.014}$ & $0.549 \pm 0.013$ & $0.562 \pm 0.012$ & $0.581 \pm 0.013$ & $0.605 \pm 0.014$ \\
\textsc{Loci} & $0.646 \pm 0.016$ & $0.527 \pm 0.016$ & $\mathbf{0.554 \pm 0.014}$ & $\mathbf{0.593 \pm 0.014}$ & $\mathbf{0.652 \pm 0.013}$ & $\mathbf{0.768 \pm 0.014}$ \\
\midrule
\multicolumn{7}{l}{AP} \\
\midrule
First & $0.456 \pm 0.014$ & $0.442 \pm 0.006$ & $0.496 \pm 0.005$ & $0.559 \pm 0.005$ & $0.623 \pm 0.004$ & $0.683 \pm 0.004$ \\
Last & $0.458 \pm 0.009$ & $0.510 \pm 0.007$ & $0.549 \pm 0.006$ & $0.598 \pm 0.005$ & $0.653 \pm 0.006$ & $0.708 \pm 0.006$ \\
Random & $0.460 \pm 0.010$ & $0.469 \pm 0.006$ & $0.520 \pm 0.005$ & $0.576 \pm 0.005$ & $0.640 \pm 0.005$ & $0.696 \pm 0.004$ \\
\cmidrule(lr){1-7}
Counting & $\mathbf{0.827 \pm 0.009}$ & $\mathbf{0.693 \pm 0.009}$ & $0.677 \pm 0.008$ & $0.673 \pm 0.008$ & $0.686 \pm 0.008$ & $0.709 \pm 0.007$ \\
\textsc{Loci} & $0.789 \pm 0.010$ & $0.686 \pm 0.010$ & $\mathbf{0.685 \pm 0.009}$ & $\mathbf{0.705 \pm 0.009}$ & $\mathbf{0.750 \pm 0.009}$ & $\mathbf{0.827 \pm 0.008}$ \\
\bottomrule
\end{tabular}

%% file: tables/repair_perfect_natural.tex
\begin{tabular}{lccccc}
\toprule
 & $p=0.1$ & $p=0.2$ & $p=0.3$ & $p=0.4$ & $p=0.5$ \\
\midrule
First + \texttt{VERIFY} & $10.12 \pm 0.48$ & $12.28 \pm 0.49$ & $15.48 \pm 0.51$ & $18.29 \pm 0.68$ & $22.11 \pm 0.72$ \\
Last + \texttt{VERIFY} & $8.51 \pm 0.38$ & $10.69 \pm 0.41$ & $13.42 \pm 0.44$ & $16.56 \pm 0.61$ & $20.04 \pm 0.70$ \\
Random + \texttt{VERIFY} & $9.22 \pm 0.41$ & $11.48 \pm 0.45$ & $14.64 \pm 0.49$ & $17.56 \pm 0.65$ & $21.17 \pm 0.74$ \\
Counting + \texttt{VERIFY} & $7.36 \pm 0.36$ & $10.01 \pm 0.42$ & $13.25 \pm 0.47$ & $16.18 \pm 0.62$ & $20.60 \pm 0.69$ \\
Restart & $16.36 \pm 0.93$ & $31.87 \pm 2.43$ & $57.54 \pm 3.98$ & $91.00 \pm 5.37$ & $122.01 \pm 6.07$ \\
\midrule
\textsc{Loci} + \texttt{VERIFY} & $6.98 \pm 0.34$ & $9.06 \pm 0.35$ & $11.64 \pm 0.40$ & $14.49 \pm 0.53$ & $18.03 \pm 0.57$ \\
\textsc{Loci} + \texttt{MASK} & $7.36 \pm 0.40$ & $12.19 \pm 0.54$ & $23.02 \pm 0.83$ & $40.07 \pm 1.41$ & $68.26 \pm 2.29^{*}$ \\
\textsc{Loci} adaptive & $\mathbf{6.31 \pm 0.34}$ & $\mathbf{7.66 \pm 0.34}$ & $\mathbf{9.50 \pm 0.37}$ & $\mathbf{11.69 \pm 0.47}$ & $\mathbf{14.77 \pm 0.52}$ \\
\bottomrule
\end{tabular}

%% file: tables/repair_perfect_voi.tex
\begin{tabular}{lccccc}
\toprule
 & $p=0.1$ & $p=0.2$ & $p=0.3$ & $p=0.4$ & $p=0.5$ \\
\midrule
First + \texttt{VERIFY} & $5.72 \pm 0.23$ & $7.65 \pm 0.25$ & $9.68 \pm 0.29$ & $12.21 \pm 0.35$ & $16.03 \pm 0.44$ \\
Last + \texttt{VERIFY} & $5.77 \pm 0.25$ & $7.27 \pm 0.26$ & $9.33 \pm 0.31$ & $11.84 \pm 0.37$ & $15.66 \pm 0.45$ \\
Random + \texttt{VERIFY} & $5.75 \pm 0.23$ & $7.35 \pm 0.23$ & $9.35 \pm 0.28$ & $11.91 \pm 0.36$ & $15.69 \pm 0.45$ \\
Counting + \texttt{VERIFY} & $5.07 \pm 0.23$ & $6.98 \pm 0.26$ & $9.40 \pm 0.33$ & $12.22 \pm 0.39$ & $16.31 \pm 0.49$ \\
Restart & $8.34 \pm 0.47$ & $15.35 \pm 0.85$ & $24.17 \pm 1.01$ & $35.85 \pm 1.35$ & $48.17 \pm 1.38$ \\
\midrule
\textsc{Loci} + \texttt{VERIFY} & $5.14 \pm 0.23$ & $6.95 \pm 0.25$ & $9.23 \pm 0.32$ & $11.89 \pm 0.37$ & $15.44 \pm 0.46$ \\
\textsc{Loci} + \texttt{MASK} & $8.38 \pm 0.60$ & $15.95 \pm 0.88$ & $23.97 \pm 1.04$ & $29.77 \pm 0.98$ & $30.42 \pm 0.90$ \\
\textsc{Loci} adaptive & $\mathbf{4.53 \pm 0.21}$ & $\mathbf{6.19 \pm 0.23}$ & $\mathbf{8.08 \pm 0.28}$ & $\mathbf{10.31 \pm 0.33}$ & $\mathbf{14.32 \pm 0.42}$ \\
\bottomrule
\end{tabular}

%% file: tables/repair_perfect_random.tex
\begin{tabular}{lccccc}
\toprule
 & $p=0.1$ & $p=0.2$ & $p=0.3$ & $p=0.4$ & $p=0.5$ \\
\midrule
First + \texttt{VERIFY} & $7.67 \pm 0.30$ & $9.66 \pm 0.35$ & $12.49 \pm 0.38$ & $15.83 \pm 0.48$ & $19.88 \pm 0.51$ \\
Last + \texttt{VERIFY} & $7.29 \pm 0.29$ & $9.23 \pm 0.35$ & $12.30 \pm 0.39$ & $15.51 \pm 0.43$ & $19.51 \pm 0.47$ \\
Random + \texttt{VERIFY} & $7.58 \pm 0.29$ & $9.44 \pm 0.36$ & $12.49 \pm 0.38$ & $15.50 \pm 0.44$ & $19.82 \pm 0.47$ \\
Counting + \texttt{VERIFY} & $5.88 \pm 0.25$ & $8.32 \pm 0.34$ & $11.95 \pm 0.41$ & $15.77 \pm 0.46$ & $19.83 \pm 0.49$ \\
Restart & $12.46 \pm 0.66$ & $21.84 \pm 1.14$ & $34.72 \pm 1.70$ & $50.47 \pm 1.87$ & $60.39 \pm 1.62$ \\
\midrule
\textsc{Loci} + \texttt{VERIFY} & $5.74 \pm 0.25$ & $8.03 \pm 0.32$ & $11.24 \pm 0.36$ & $14.80 \pm 0.43$ & $18.13 \pm 0.45$ \\
\textsc{Loci} + \texttt{MASK} & $9.12 \pm 0.52$ & $16.81 \pm 0.67$ & $24.81 \pm 0.71$ & $32.75 \pm 0.69$ & $32.85 \pm 0.84$ \\
\textsc{Loci} adaptive & $\mathbf{5.38 \pm 0.24}$ & $\mathbf{7.22 \pm 0.31}$ & $\mathbf{9.84 \pm 0.34}$ & $\mathbf{12.90 \pm 0.42}$ & $\mathbf{16.99 \pm 0.46}$ \\
\bottomrule
\end{tabular}

%% file: tables/repair_cub_confirmation.tex
\begin{tabular}{lrrrrr}
\toprule
Method & $p=0.1$ & $p=0.2$ & $p=0.3$ & $p=0.4$ & $p=0.5$ \\
\midrule
First + \texttt{VERIFY} & $14.71 \pm 0.39$ & $19.92 \pm 0.45$ & $25.73 \pm 0.57$ & $32.17 \pm 0.67$ & $41.09 \pm 0.77$ \\
Last + \texttt{VERIFY} & $13.72 \pm 0.37$ & $19.00 \pm 0.43$ & $24.66 \pm 0.56$ & $31.20 \pm 0.63$ & $40.48 \pm 0.74$ \\
Random + \texttt{VERIFY} & $14.42 \pm 0.38$ & $19.46 \pm 0.42$ & $25.28 \pm 0.56$ & $31.51 \pm 0.62$ & $40.86 \pm 0.76$ \\
Counting + \texttt{VERIFY} & $9.72 \pm 0.30$ & $15.92 \pm 0.41$ & $22.78 \pm 0.55$ & $30.30 \pm 0.65$ & $40.41 \pm 0.79$ \\
Restart & $46.62 \pm 2.33$ & $128.69 \pm 6.56$ & $337.77 \pm 15.35$ & $622.74 \pm 22.75$ & $1005.93 \pm 28.17$ \\
\midrule
\textsc{Loci} + \texttt{VERIFY} & $9.45 \pm 0.27$ & $14.96 \pm 0.36$ & $21.16 \pm 0.50$ & $27.71 \pm 0.57$ & $36.01 \pm 0.66$ \\
\textsc{Loci} + \texttt{MASK} & $21.18 \pm 0.90$ & $65.00 \pm 2.62$ & $223.92 \pm 6.56$ & $401.02 \pm 3.77$ & $373.95 \pm 7.54$ \\
\textsc{Loci} adaptive & $\mathbf{9.11 \pm 0.27}$ & $\mathbf{14.14 \pm 0.35}$ & $\mathbf{19.29 \pm 0.47}$ & $\mathbf{25.37 \pm 0.56}$ & $\mathbf{34.68 \pm 0.68}$ \\
\bottomrule
\end{tabular}

%% file: tables/freeform_llama_selector.tex
\begin{tabular}{lccccc}
\toprule
Selector & Average & Animals & Food & Objects & Places \\
\midrule
Counting & $21.31 \pm 1.10$ & $20.76 \pm 1.90$ & $18.24 \pm 0.68$ & $27.25 \pm 3.46$ & $19.00 \pm 1.56$ \\
Llama 3.1 8B & $21.19 \pm 1.20$ & $21.00 \pm 2.20$ & $18.48 \pm 1.36$ & $26.67 \pm 3.49$ & $18.64 \pm 1.74$ \\
\textsc{Loci} & $\mathbf{12.47 \pm 0.79}$ & $\mathbf{12.48 \pm 1.59}$ & $\mathbf{11.10 \pm 1.00}$ & $\mathbf{16.96 \pm 2.24}$ & $\mathbf{9.36 \pm 1.22}$ \\
\bottomrule
\end{tabular}

%% file: tables/multiwoz_localization.tex
\begin{tabular}{lcccccc}
\toprule
 & $p=0.1$ & $p=0.2$ & $p=0.3$ & $p=0.4$ & $p=0.5$ & Average \\
\midrule
Failed draws & 247 & 427 & 600 & 740 & 844 & 2,858 \\
Mean $T$ & 2.58 & 2.56 & 2.55 & 2.51 & 2.49 & 2.54 \\
Mean $K$ & 1.13 & 1.15 & 1.31 & 1.36 & 1.53 & 1.29 \\
\midrule
\multicolumn{7}{l}{Hit@1} \\
\midrule
First & $0.457 \pm 0.032$ & $0.518 \pm 0.023$ & $0.523 \pm 0.021$ & $0.545 \pm 0.019$ & $0.609 \pm 0.016$ & $0.530 \pm 0.011$ \\
Last & $0.449 \pm 0.032$ & $0.436 \pm 0.023$ & $0.520 \pm 0.021$ & $0.549 \pm 0.019$ & $0.614 \pm 0.018$ & $0.513 \pm 0.012$ \\
Random & $0.459 \pm 0.010$ & $0.473 \pm 0.009$ & $0.529 \pm 0.009$ & $0.556 \pm 0.008$ & $0.625 \pm 0.009$ & $0.528 \pm 0.005$ \\
Counting & $0.608 \pm 0.031$ & $0.630 \pm 0.022$ & $0.647 \pm 0.020$ & $0.668 \pm 0.018$ & $0.723 \pm 0.015$ & $0.655 \pm 0.012$ \\
\textsc{Loci} & $\mathbf{0.673 \pm 0.030}$ & $\mathbf{0.730 \pm 0.022}$ & $\mathbf{0.772 \pm 0.018}$ & $\mathbf{0.792 \pm 0.016}$ & $\mathbf{0.825 \pm 0.013}$ & $\mathbf{0.758 \pm 0.011}$ \\
\midrule
\multicolumn{7}{l}{AP} \\
\midrule
First & $0.700 \pm 0.018$ & $0.738 \pm 0.013$ & $0.742 \pm 0.011$ & $0.753 \pm 0.010$ & $0.793 \pm 0.008$ & $0.745 \pm 0.006$ \\
Last & $0.699 \pm 0.018$ & $0.688 \pm 0.013$ & $0.737 \pm 0.012$ & $0.756 \pm 0.011$ & $0.793 \pm 0.009$ & $0.734 \pm 0.007$ \\
Random & $0.701 \pm 0.007$ & $0.709 \pm 0.006$ & $0.741 \pm 0.006$ & $0.757 \pm 0.005$ & $0.796 \pm 0.005$ & $0.741 \pm 0.004$ \\
Counting & $0.795 \pm 0.016$ & $0.803 \pm 0.012$ & $0.811 \pm 0.010$ & $0.824 \pm 0.009$ & $0.851 \pm 0.008$ & $0.817 \pm 0.006$ \\
\textsc{Loci} & $\mathbf{0.833 \pm 0.015}$ & $\mathbf{0.856 \pm 0.011}$ & $\mathbf{0.877 \pm 0.009}$ & $\mathbf{0.888 \pm 0.008}$ & $\mathbf{0.901 \pm 0.007}$ & $\mathbf{0.871 \pm 0.006}$ \\
\bottomrule
\end{tabular}

%% file: tables/multiwoz_recovery.tex
\begin{tabular}{@{}lrrrrrr@{}}
\toprule
Method & $p=0.1$ & $p=0.2$ & $p=0.3$ & $p=0.4$ & $p=0.5$ & Average \\
\midrule
Failed draws & 247 & 427 & 600 & 740 & 844 & 2,858 \\
\midrule
First + \texttt{VERIFY} & $3.07 \pm 0.09$ & $3.10 \pm 0.08$ & $3.56 \pm 0.08$ & $3.68 \pm 0.08$ & $4.11 \pm 0.09$ & $3.51 \pm 0.05$ \\
Last + \texttt{VERIFY} & $3.07 \pm 0.11$ & $3.23 \pm 0.08$ & $3.63 \pm 0.08$ & $3.68 \pm 0.08$ & $4.11 \pm 0.09$ & $3.54 \pm 0.06$ \\
Random + \texttt{VERIFY} & $3.12 \pm 0.10$ & $3.13 \pm 0.07$ & $3.59 \pm 0.08$ & $3.71 \pm 0.08$ & $4.12 \pm 0.09$ & $3.53 \pm 0.06$ \\
Counting + \texttt{VERIFY} & $2.81 \pm 0.08$ & $2.90 \pm 0.07$ & $3.40 \pm 0.08$ & $3.50 \pm 0.08$ & $4.01 \pm 0.09$ & $3.33 \pm 0.05$ \\
Restart & $3.15 \pm 0.10$ & $3.15 \pm 0.08$ & $3.18 \pm 0.07$ & $3.18 \pm 0.07$ & $3.19 \pm 0.07$ & $3.17 \pm 0.06$ \\
\midrule
\textsc{Loci} + \texttt{VERIFY} & $2.70 \pm 0.09$ & $2.80 \pm 0.08$ & $3.29 \pm 0.08$ & $3.38 \pm 0.08$ & $3.92 \pm 0.09$ & $3.22 \pm 0.05$ \\
\textsc{Loci} + \texttt{MASK} & $\mathbf{1.86 \pm 0.09}$ & $\mathbf{1.88 \pm 0.08}$ & $\mathbf{2.15 \pm 0.07}$ & $\mathbf{2.21 \pm 0.08}$ & $\mathbf{2.56 \pm 0.08}$ & $\mathbf{2.13 \pm 0.05}$ \\
\textsc{Loci} adaptive & $2.16 \pm 0.09$ & $2.31 \pm 0.08$ & $2.76 \pm 0.08$ & $2.80 \pm 0.07$ & $3.11 \pm 0.07$ & $2.63 \pm 0.06$ \\
\bottomrule
\end{tabular}

%% file: tables/multiwoz_recovery_components.tex
\begin{tabular}{@{}lrrrr@{}}
\toprule
Policy & Questions & Verifications & Wrong guesses & Total \\
\midrule
Always-\texttt{VERIFY} & $1.29 \pm 0.03$ & $1.59 \pm 0.02$ & $0.34 \pm 0.01$ & $3.22 \pm 0.05$ \\
Always-\texttt{MASK} & $1.66 \pm 0.04$ & $0.00 \pm 0.00$ & $0.47 \pm 0.02$ & $\mathbf{2.13 \pm 0.05}$ \\
Adaptive & $1.45 \pm 0.03$ & $0.94 \pm 0.03$ & $0.24 \pm 0.01$ & $2.63 \pm 0.06$ \\
\bottomrule
\end{tabular}

%% file: tables/reliability_localization.tex
\begin{tabular}{lccccc}
\toprule
 & $p=0.1$ & $p=0.2$ & $p=0.3$ & $p=0.4$ & $p=0.5$ \\
\midrule
Scored failures & 565 & 822 & 970 & 1,038 & 1,066 \\
\midrule
\multicolumn{6}{l}{Hit@1} \\
\midrule
Counting & $0.500 \pm 0.023$ & $0.530 \pm 0.022$ & $0.564 \pm 0.021$ & $0.628 \pm 0.022$ & $0.628 \pm 0.019$ \\
\textsc{Loci}, $\rho=1-p$ & $0.509 \pm 0.024$ & $0.528 \pm 0.021$ & $0.586 \pm 0.020$ & $0.674 \pm 0.021$ & $0.744 \pm 0.021$ \\
\textsc{Loci}, $\rho=0.5$ & $0.462 \pm 0.026$ & $0.534 \pm 0.023$ & $0.611 \pm 0.022$ & $0.710 \pm 0.020$ & $0.744 \pm 0.020$ \\
\textsc{Loci}, $\rho=0.7$ & $0.491 \pm 0.027$ & $0.532 \pm 0.022$ & $0.586 \pm 0.021$ & $0.660 \pm 0.020$ & $0.672 \pm 0.020$ \\
\textsc{Loci}, $\rho=0.9$ & $0.509 \pm 0.026$ & $0.535 \pm 0.022$ & $0.574 \pm 0.021$ & $0.634 \pm 0.019$ & $0.643 \pm 0.019$ \\
\textsc{Loci}, $\rho=0.95$ & $0.509 \pm 0.026$ & $0.535 \pm 0.022$ & $0.576 \pm 0.021$ & $0.631 \pm 0.020$ & $0.640 \pm 0.019$ \\
\midrule
\multicolumn{6}{l}{AP} \\
\midrule
Counting & $0.682 \pm 0.015$ & $0.686 \pm 0.014$ & $0.702 \pm 0.013$ & $0.748 \pm 0.012$ & $0.760 \pm 0.010$ \\
\textsc{Loci}, $\rho=1-p$ & $0.693 \pm 0.016$ & $0.685 \pm 0.014$ & $0.701 \pm 0.014$ & $0.755 \pm 0.013$ & $0.808 \pm 0.011$ \\
\textsc{Loci}, $\rho=0.5$ & $0.609 \pm 0.019$ & $0.660 \pm 0.016$ & $0.710 \pm 0.015$ & $0.776 \pm 0.012$ & $0.808 \pm 0.011$ \\
\textsc{Loci}, $\rho=0.7$ & $0.650 \pm 0.018$ & $0.672 \pm 0.015$ & $0.701 \pm 0.014$ & $0.752 \pm 0.013$ & $0.769 \pm 0.012$ \\
\textsc{Loci}, $\rho=0.9$ & $0.694 \pm 0.017$ & $0.696 \pm 0.014$ & $0.710 \pm 0.014$ & $0.749 \pm 0.012$ & $0.760 \pm 0.011$ \\
\textsc{Loci}, $\rho=0.95$ & $0.694 \pm 0.017$ & $0.697 \pm 0.014$ & $0.711 \pm 0.014$ & $0.749 \pm 0.012$ & $0.759 \pm 0.011$ \\
\bottomrule
\end{tabular}

%% file: tables/reliability_recovery.tex
\begin{tabular}{lcc}
\toprule
Assumed reliability & \textsc{Loci} + \texttt{VERIFY} & \textsc{Loci} adaptive \\
\midrule
$0.5$ & $12.17 \pm 0.64$ & $9.79 \pm 0.82$ \\
$0.7$ & $12.27 \pm 0.69$ & $9.01 \pm 0.76$ \\
$0.9$ & $12.32 \pm 0.68$ & $8.36 \pm 0.98$ \\
$0.95$ & $12.32 \pm 0.68$ & $8.86 \pm 1.02$ \\
\bottomrule
\end{tabular}